\documentclass{article}

\PassOptionsToPackage{numbers}{natbib}
\usepackage[preprint]{neurips_2026}

\usepackage[utf8]{inputenc}
\usepackage[T1]{fontenc}
\usepackage{float}
\usepackage{hyperref}
\hypersetup{hidelinks}
\usepackage{url}
\usepackage{booktabs}
\usepackage{multirow}
\usepackage{amsmath}
\usepackage{amsthm}
\usepackage{amsfonts}
\usepackage{amssymb}
\usepackage{nicefrac}
\usepackage{microtype}
\usepackage{xcolor}
\usepackage{colortbl}
\usepackage{graphicx}
\usepackage{wrapfig}
\usepackage{needspace}
\usepackage[most]{tcolorbox}
\usepackage{enumitem}

\definecolor{promptbg}{RGB}{248,250,252}
\definecolor{promptborder}{RGB}{58,87,130}

\newcommand{\acgreedy}{Answer-Coverage Greedy}
\newcommand{\scgreedy}{Support-Coverage Greedy}
\newcommand{\readout}{Latent Verification Debate}
\newcommand{\secref}[1]{Sec.~\ref{#1}}
\newcommand{\figref}[1]{Fig.~\ref{#1}}
\newcommand{\tabref}[1]{Table~\ref{#1}}
\newcommand{\appref}[1]{Appendix~\ref{#1}}
\newcommand{\eqnref}[1]{Eqn.~\eqref{#1}}

\newcommand{\lemref}[1]{Lemma~\ref{#1}}
\newcommand{\propref}[1]{Proposition~\ref{#1}}

\newtcolorbox{promptbox}[1]{
    breakable,
    colback=promptbg,
    colframe=promptborder,
    colbacktitle=promptborder,
    coltitle=white,
    title={#1},
    fonttitle=\small\bfseries\sffamily,
    boxrule=0.6pt,
    arc=2pt,
    left=7pt,
    right=7pt,
    top=6pt,
    bottom=6pt,
    before skip=0.6em,
    after skip=0.8em
}

\usepackage{amsmath,amsfonts,bm,eqnarray}
\usepackage{cancel}
\usepackage{amsthm}
\usepackage{tikz}
\usetikzlibrary{tikzmark}

\newtheorem{lemma}{Lemma}

\newtheorem{proposition}{Proposition}

\newtheorem{innercustomgeneric}{\customgenericname}
\providecommand{\customgenericname}{}
\newcommand{\newcustomtheorem}[2]{%
  \newenvironment{#1}[1]
  {%
   \renewcommand\customgenericname{#2}%
   \renewcommand\theinnercustomgeneric{##1}%
   \innercustomgeneric
  }
  {\endinnercustomgeneric}
}

\newcustomtheorem{customThm}{Theorem}
\newcustomtheorem{customLemma}{Lemma}
\newcustomtheorem{customCor}{Corollary}
\newcustomtheorem{customProposition}{Proposition}

\def\figref#1{Fig.~\ref{#1}}

\def\secref#1{Sec.~\ref{#1}}
\def\appref#1{Appendix~\ref{#1}}

\def\eqref#1{equation~\ref{#1}}

\def\eqnref#1{Eqn.~\ref{#1}}

\def\1{\bm{1}}

\DeclareMathAlphabet{\mathsfit}{\encodingdefault}{\sfdefault}{m}{sl}
\SetMathAlphabet{\mathsfit}{bold}{\encodingdefault}{\sfdefault}{bx}{n}

\def\gP{{\mathcal{P}}}

\renewcommand{\frac}{\tfrac}

\title{When Debate Helps:
Proposal Supply and\\Verification-Aware Readout in Multi-Agent Reasoning}

\author{
  \textbf{Zihao Zhao}$^{1,*}$ \quad
  \textbf{Tunyu Zhang}$^{2}$ \quad
  \textbf{Haizhou Shi}$^{2}$
  \\[0.35em]
  \textbf{Yusong Zhao}$^{1}$ \quad
  \textbf{Xinxi Zhang}$^{2}$ \quad
  \textbf{Hao Wang}$^{1,*}$
  \\
  $^{1}$University of Illinois Urbana-Champaign \quad
  $^{2}$Rutgers University
  \\
  $^{*}$Corresponding authors: \texttt{\{zihaoz,hwangml\}@illinois.edu}
}

\begin{document}

\maketitle

\begin{abstract}
Multi-agent debate can improve reasoning, yet often fails to beat simple
majority voting. We argue that successful debate requires two distinct
mechanisms: proposal supply must surface a correct answer, and readout must
identify that answer when voting misses it. We formalize the first requirement
through recoverable headroom, which measures cases where a correct proposal is
available but the majority answer is wrong. For the second, we develop
\readout{} (LVD), an accounting model in which candidate proposals receive
answer-specific verification evidence before final generation. Controlled
fixed-proposal interventions estimate this latent effect in equivalent
peer-support units and show that correct evidence changes answer probabilities
and generated decisions while proposal supply remains fixed. To
improve proposal supply, we construct societies from neural-thicket agents
using labeled and label-free coverage objectives. Across two backbones and
matched-budget reasoning benchmarks, coverage-selected societies increase
complementary proposal supply and improve aggregate accuracy in repeated
stochastic evaluations. Round-level controls further show that interaction
provides gains beyond applying the same finalizer directly to the initial
proposals. These results identify proposal coverage and truth-sensitive
evidence use as complementary conditions for debate to outperform voting.
Code is available at
\url{https://github.com/Wang-ML-Lab/when-debate-helps}.
\end{abstract}

\section{Introduction}

Debate lets language-model agents propose, critique, and aggregate alternative
reasoning paths~\citep{du2024improving, chanchateval, xiong2023examining, estornellacc}. However, it often fails to beat
simple majority voting or self-consistency~\citep{wang2023selfconsistency, wang2024rethinking}. This raises a basic question: when does interaction help beyond
voting?

A prior martingale-null analysis~\citep{choi2026debate} gives a useful baseline: when interaction is
symmetric and truth-agnostic, debate can mix proposals without creating positive
expected drift toward correctness, and therefore is not guaranteed to improve
over majority vote.
We use this null as a boundary case: practical debate can succeed only when it
adds structure outside neutral proposal mixing.

We argue that two mechanisms determine when this happens. The
\textbf{proposal} mechanism determines which candidate answers the agent society
makes available. The \textbf{readout} mechanism distills these candidates into a
final answer. For debate to help, the proposal process must surface at least one
correct answer, and the readout must select correct proposals rather than merely
count how many agents repeat each proposal. When the correct proposal is held by a
minority and vote misses it, this becomes a headroom-recovery problem: the
proposal stage has hit the answer, but the majority readout has not
(\secref{sec:headroom_lvd}).
This view
connects debate to verifiers, LLM judges, critique, and candidate reranking~\citep{cobbe2021trainingverifierssolvemath, lightman2023letsverifystepstep, li-etal-2023-making, zheng2023judgingllmasajudgemtbenchchatbot, NEURIPS2023_91edff07}, but treats
verification as part of the debate readout rather than only as an external
selector.

\begin{figure}[t]
\centering
\includegraphics[width=0.92\linewidth]{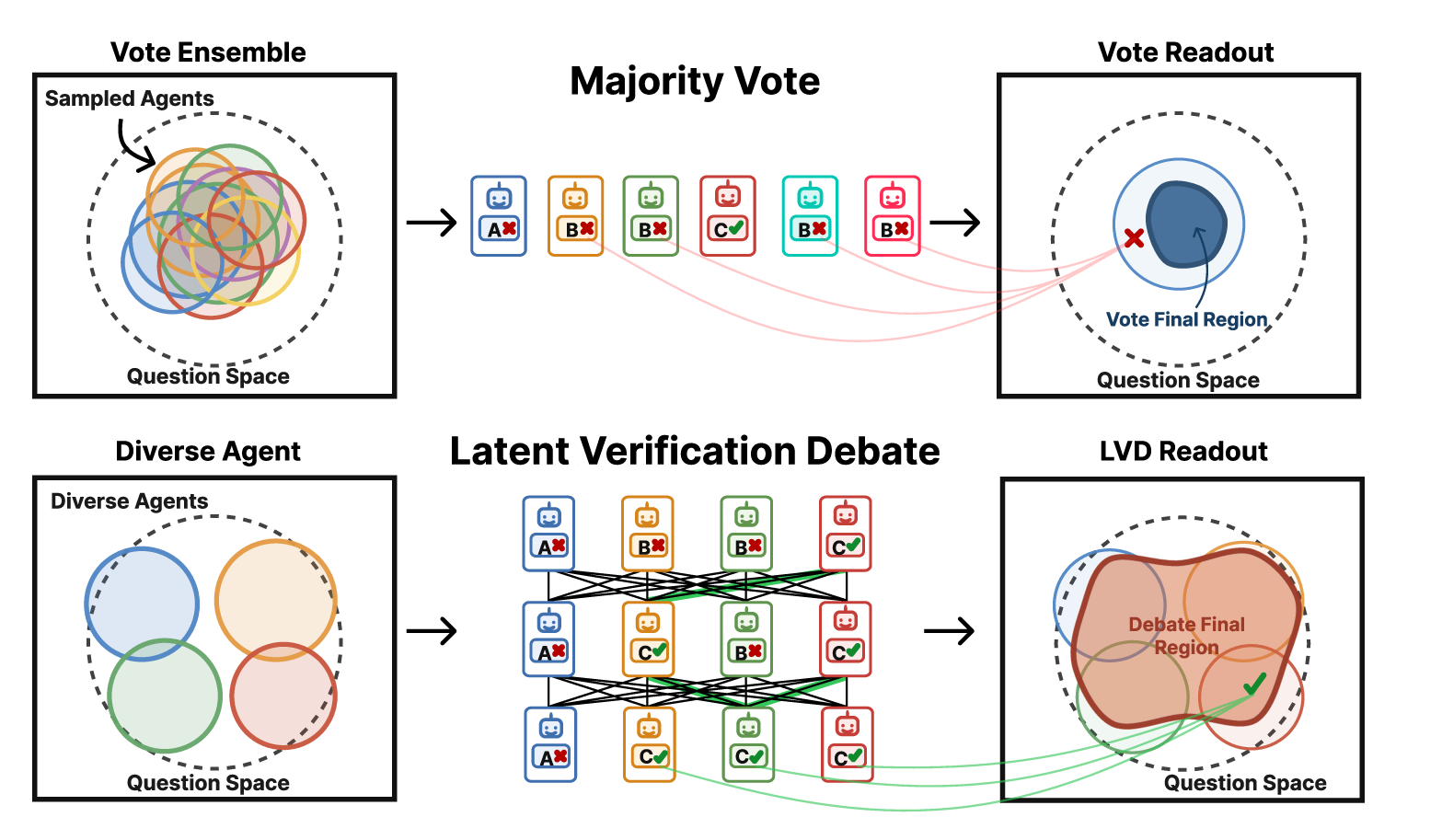}
\vskip -0.5cm
\caption{Overview of the supply-readout view. Transparent regions denote
questions for which individual sampled agents can surface the correct answer;
solid regions denote questions recovered by the final readout. \textbf{Top:} Overlapping
sampled agents provide redundant proposal coverage, so majority vote keeps a
small shared region and can reinforce a repeated wrong answer. \textbf{Bottom:} Diverse
agents provide complementary proposal coverage; after debate, LVD uses
verification-aware evidence to recover a surfaced correct minority answer.
}
\label{fig:teaser}
\vskip -4mm
\end{figure}

We formalize this framework through recoverable headroom: cases where the
proposal stage surfaces the correct answer but majority vote selects a wrong
answer. Here, a proposal hit means that at least one surfaced answer matches the
correct answer. For readout, \readout{} (LVD) models candidate-specific
verification evidence added to proposal-count dynamics. For supply, we sample
neural-thicket agents by applying small perturbations to a pretrained model's
weights~\citep{gan2026neuralthicketsdiversetask, zhang2026tokur,
shi2025trainingfree} and use labeled and label-free coverage objectives to
select agents with complementary proposal behavior. \figref{fig:teaser}
summarizes the supply-readout framework.

Across reasoning and multi-discipline benchmarks, we test this framework on
Qwen3-4B and OLMo3-7B under matched inference budgets. Coverage-based
neural-thicket construction tests the proposal-supply mechanism, while
controlled fixed-proposal interventions test whether peer evidence changes
readout without changing the candidate set. Repeated stochastic evaluations
and round-level controls assess robustness and separate immediate finalizer
effects from gains produced by interaction.
Our contributions are as follows:
\begin{itemize}[nosep]
\item We introduce recoverable headroom and an exact decomposition that
separates recovered headroom, newly correct cases, and damage to vote-correct
cases.
\item We develop \readout{} (LVD) as a verification-aware readout model and
introduce controlled fixed-proposal probes that operationalize its latent
evidence weights without treating them as literal hidden counters.
\item We introduce labeled and label-free coverage-based construction for
neural-thicket agent pools and show, through matched stochastic replications
and mechanism diagnostics, when complementary proposal supply yields gains over
Vanilla Debate.
\end{itemize}

\section{Setup and Motivating Observations}
\label{sec:setup_motivation}
\label{sec:setup}

We first define the proposal objects shared by vote, debate, and proposal-hit
diagnostics, then use two short observations to motivate the theory.

\subsection{Proposal Setup and the Martingale Null}
\label{subsec:martingale_null}
\label{subsec:dcm_accounting}

We consider reasoning tasks where responses can be compared through a final
answer value. Let $\mathcal X$ be the input space and let $\mathcal Y$ be the
decision space. For an input $x\in\mathcal X$ with correct answer
$y^\star\in\mathcal Y$, let $\mathcal P=\{e_1,\ldots,e_N\}$ be a candidate
agent pool and let $S\subseteq\mathcal P$ with $|S|=K$ be the selected society.
We use round $t=0$ for the initial independent proposals and rounds $t\ge 1$ for
subsequent debate updates. At round $t$, agent $e_i$ produces a response
containing a rationale and a normalized answer
\[
    o_{i,t}(x)=\bigl(r_{i,t}(x),a_{i,t}(x)\bigr),
    \qquad a_{i,t}(x)\in\mathcal Y.
\]
The rationale $r_{i,t}$ is visible textual evidence. The answer value $a_{i,t}$
is used for vote and proposal-hit diagnostics.
Let $\mathcal M_t(x,S)$ be the multiset of answer values produced by the
selected society at round $t$. For each candidate answer $k$, define its
proposal count as
\[
    C_t^{(k)}=\sum\nolimits_{e_i\in S}\mathbf 1\{a_{i,t}(x)=k\}.
\]
Since each selected agent contributes one answer at round $t$,
$\sum_{k\in\mathcal Y}C_t^{(k)}=K$.
Majority vote operates on $\mathcal M_t$:
\[
    y_t^{\mathrm{vote}}(x,S)
    \in
    \arg\max_{k\in\mathcal Y} C_t^{(k)}.
\]
Ties are resolved by a fixed tie-breaking rule. The proposal-hit diagnostic asks
whether the correct answer was surfaced, that is, whether
$y^\star$ appears in $\mathcal M_t(x,S)$. This diagnostic is not an available readout. It
measures whether the proposal supply contains a recoverable correct answer.

In multi-agent debate, agents update from visible prior responses. Let
$\mathcal N(i)\subseteq S$ be the agents whose previous responses are visible to
$e_i$. At round $t\ge 1$,
agent $e_i$ observes
\[
    \mathcal F_{i,t}(x,S)=\{o_{j,t-1}(x):e_j\in\mathcal N(i)\}
\]
and updates through a one-round debate operator,
\[
    o_{i,t}(x)=\mathcal D_i(x,\mathcal F_{i,t}(x,S)).
\]
Here $\mathcal F_{i,t}$ is the local context used by agent $e_i$ for its
round-$t$ update. We use $\mathcal F_t$ to denote the global visible debate history
up to round $t$.

To connect this setup to prior debate-vs-vote theory~\citep{choi2026debate, smit2023are, zhang2025stopovervaluingmultiagentdebate, kaesberg-etal-2025-voting},
we use the Dirichlet-categorical-multinomial (DCM) accounting layer as a
proposal-count baseline. In this view, a count-based
belief state $\alpha_t$ assigns mass to candidate answers. The initial state
$\alpha_0$ represents the model's pre-debate answer belief. Observing proposal
counts updates this state symmetrically:
\begin{equation}
    \alpha_{t+1}^{(k)}=\alpha_t^{(k)}+C_t^{(k)}.
    \label{eq:dcm_count_update}
\end{equation}
This update records how often answer $k$ is surfaced; it does not know whether
$k$ is correct.

\textbf{Symmetric Truth-Agnostic Null.}
Under symmetric, truth-agnostic interaction, agents mix surfaced proposals
without a signal that preferentially favors the true answer. Prior work
formalizes this setting as a martingale over aggregate correctness mass~\citep{choi2026debate}:
\begin{equation}
    \mathbb E[Q_{t+1}\mid\mathcal F_t]=Q_t.
    \label{eq:martingale_null}
\end{equation}
Here $Q_t$ denotes the aggregate correctness mass after round $t$. Thus
interaction alone does not create positive expected drift toward correctness.
Equivalently, debate that only mixes proposals is not guaranteed to improve over
majority vote.

\textbf{Observation 1: Debate Dynamics Can Depart from the Null.}
Round-wise debate diagnostics, reported in
\figref{fig:motivating_observations}(a) and detailed in
\appref{app:motivating_observations}, Qwen2.5-3B-Instruct~\cite{qwen2025qwen25technicalreport} shows no
positive drift after interaction, which is empirically consistent with
null-like proposal mixing. Under the same protocol, Qwen3-4B-Instruct~\citep{yang2025qwen3technicalreport} shows
persistent gain over vote. We treat this as an empirical boundary: weaker
dynamics can look like neutral mixing, while practical debate can contain
structure beyond symmetric proposal mixing.

\begin{figure}[t]
\centering
\includegraphics[width=\linewidth]{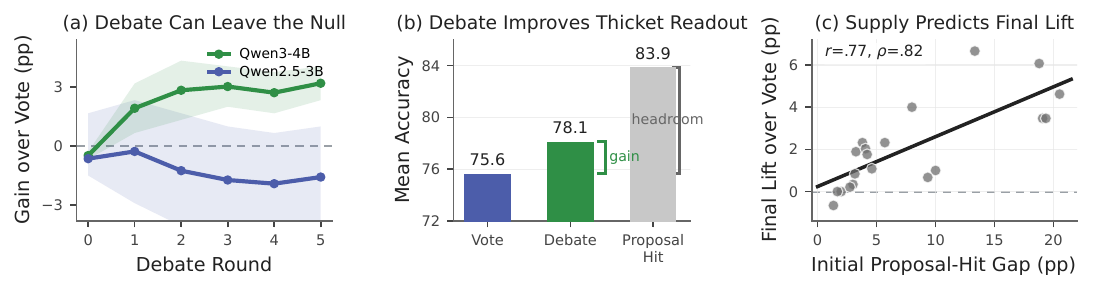}
\vskip -0.4cm
\caption{
Motivating observations.
\textbf{(a)} Qwen2.5-3B-Instruct shows null-like debate dynamics with little gain over
vote, while Qwen3-4B-Instruct shows positive gain across rounds.
\textbf{(b)} Under the same neural-thicket supply regime, \emph{Debate} readout improves over
\emph{vote}, while the proposal-hit upper bound remains higher, indicating recoverable
headroom.
\textbf{(c)} Final debate lift is positively correlated with initial proposal-hit
headroom, suggesting that debate is supply-constrained.
Together, these observations motivate a supply-readout account: debate helps when correct minority proposals are surfaced and the readout can recover them through verification-aware dynamics.
}
\label{fig:motivating_observations}
\vskip -0.4cm
\end{figure}

\textbf{Observation 2: Neural-Thicket Societies Surface Recoverable Expertise.}
A second observation comes from neural-thicket agent societies~\citep{gan2026neuralthicketsdiversetask}.
A \underline{\emph{neural thicket}} is a local pool of models obtained by perturbing pretrained
weights $\theta_0$ near a base model, so that each sampled perturbation can
behave as a nearby specialist $e_m$.
We use such perturbations as heterogeneous debate agents. Concretely, for base
weights $\theta_0$, seed $s_m$, and noise scale $\sigma_m$, a perturbed agent has weights
$
\theta_m=\theta_0+\sigma_m\epsilon(s_m),
$.
The language-model agent $e_m$ uses the perturbed weights $\theta_m$. Here
$\epsilon(s_m)$ is a standard Gaussian noise vector drawn from
$\mathcal N(0,I)$ using seed $s_m$; $\sigma_m$ controls the perturbation
scale.
The thicket pool is
$
\gP_N(\theta_0)=\{e_m\}_{m=1}^N
$.
To form a small society, the standard neural-thicket baseline assigns each
sampled agent a scalar score $v_m$ on a training set
$\mathcal D_{\mathrm{train}}$.
Typically, $v_m$ is training accuracy (defined in \appref{app:seed_profiling_details}).
The baseline selects the top-$K$ agents by $v_m$ and uses them for thicket vote
or debate. Our method keeps the same sample-and-score interface but replaces
scalar top-$K$ selection with coverage-based construction (\secref{sec:supply}).

These thicket societies allow us to separate proposal supply from final readout.
\figref{fig:motivating_observations}(b) fixes the selected society and compares
round-0 vote, final debate readout, and the round-0 proposal-hit upper bound.
Debate improves over vote under this fixed supply, so the gain is not merely a
larger agent pool or more independent samples. The remaining proposal-hit gap
shows that some surfaced correct answers are still missed by the final readout.
\figref{fig:motivating_observations}(c) further shows that round-0 proposal-hit
headroom correlates with final lift over vote. This pattern suggests that
debate gains are supply-constrained: the readout mostly converts surfaced or
early-emerging correct proposals that majority vote does not extract reliably.

\textbf{Takeaway.}
The martingale null is a useful boundary case, but not a
complete description of practical debate. Under the same supply regime, debate
can act as a stronger readout than majority vote; across settings, its lift is
tied to round-0 proposal-hit headroom. This motivates our supply-readout view:
useful debate requires enough proposal supply to create recoverable headroom
and a verification-aware readout to convert that headroom into final accuracy.
\secref{sec:headroom_lvd} formalizes this view.

\section{Headroom Recovery Theory}
\label{sec:headroom_lvd}

The observations above separate debate into proposal supply and readout. We now
formalize this separation using round $0$ as the independent proposal stage
before interaction. For a selected society $S$, the round-0 answers define the
initial proposal supply, the vote baseline, and the proposal-hit diagnostic.
Majority vote is strong because it uses the most common surfaced answer, but it
can discard a correct minority proposal. We measure this missed opportunity with
recoverable headroom.

\subsection{Recoverable Headroom and Gain}
\label{subsec:headroom_accounting}
\label{subsec:recovery_damage_decomposition}

For a fixed selected society $S$, let
$\mathcal M(x,S):=\mathcal M_0(x,S)$ be the round-0 proposal multiset used for
headroom accounting. For each instance $x$, define two indicators:
\begin{equation}
\begin{aligned}
    V(x,S)&=\mathbf 1\{\operatorname{Vote}(\mathcal M(x,S))=y^\star\},\\
    P(x,S)&=\mathbf 1\{\exists a\in\mathcal M(x,S):a=y^\star\}.
\end{aligned}
    \label{eq:vote_proposal_hit}
\end{equation}
Here $V$ records whether majority vote is correct, while $P$ records whether
the proposal multiset contains the correct answer. We define society-level
recoverable headroom by averaging over the evaluation distribution:
\begin{equation}
    \mathcal H(S)=\Pr_x(P(x,S)=1,V(x,S)=0).
    \label{eq:headroom}
\end{equation}
Thus $\mathcal H(S)$ measures cases where the society contains the correct
answer but the majority readout misses it.
Let $D_\eta(x,S)$ be the correctness indicator of a final readout with
configuration $\eta$. We define the readout's recovery, new-correctness, and
damage rates by
\begin{equation}
\begin{aligned}
    R_\eta(S)&=\Pr(D_\eta=1\mid P=1,V=0),\\
    I_\eta(S)&=\Pr(D_\eta=1\mid P=0,V=0),\\
    L_\eta(S)&=\Pr(D_\eta=0\mid V=1).
\end{aligned}
    \label{eq:recovery_damage}
\end{equation}
Here $R_\eta$ measures recovery of vote-missed surfaced answers, $I_\eta$
measures correctness that emerges after the initial proposal stage, and
$L_\eta$ measures harm to vote-correct cases. The net gain over vote is
\begin{equation}
    \mathcal G_\eta(S)=\mathbb E[D_\eta(x,S)-V(x,S)].
    \label{eq:gain_def}
\end{equation}
\begin{proposition}[\textbf{Recovery-Damage Gain Decomposition}]
\label{prop:recovery_damage_decomposition}
With the initial proposal supply fixed, the gain over vote decomposes as
\begin{equation}
    \mathcal G_\eta(S)
    =
    \mathcal H(S)R_\eta(S)
    +
    \Pr(P=0,V=0)I_\eta(S)
    -
    \Pr(V=1)L_\eta(S).
    \label{eq:headroom_decomposition}
\end{equation}
\end{proposition}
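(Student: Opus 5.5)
The plan is to prove the identity by conditioning on the three-cell partition of instances induced by the round-0 indicators $(P,V)$ from \eqnref{eq:vote_proposal_hit}. The first step is to observe that $V=1$ implies $P=1$: if majority vote returns $y^\star$, then $y^\star$ must occur in $\mathcal M(x,S)$, since $\operatorname{Vote}$ selects an element of the multiset (any tie-breaking rule chooses among surfaced answers). Consequently the events $\{P=1,V=0\}$, $\{P=0,V=0\}$ and $\{V=1\}$ are disjoint and exhaustive. The only non-bookkeeping point is this observation, and it is where I would be careful. The event $\{P=0,V=1\}$ must be shown to be empty rather than merely assumed; otherwise the third term would have to be conditioned on $\{V=1\}$ including that cell. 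In fact the decomposition below is valid even without it, because the last term conditions only on $V=1$. The observation is needed only to identify the first cell with $\mathcal H(S)$ from \eqnref{eq:headroom}.

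Next I write $\mathcal G_\eta(S)=\mathbb E[D_\eta]-\mathbb E[V]$, using the definition in \eqnref{eq:gain_def} and linearity. Both $D_\eta$ and $V$ are indicators, so I would compute $\mathbb E[D_\eta]=\Pr(D_\eta=1)$ by the law of total probability over the partition:
\[
\Pr(D_\eta=1)=\Pr(P=1,V=0)R_\eta(S)+\Pr(P=0,V=0)I_\eta(S)+\Pr(V=1)\Pr(D_\eta=1\mid V=1).
\]
The first coefficient is $\mathcal H(S)$, and the conditional probabilities are exactly $R_\eta$ and $I_\eta$ from \eqnref{eq:recovery_damage}.

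Finally, I would use $\Pr(D_\eta=1\mid V=1)=1-L_\eta(S)$ together with $\mathbb E[V]=\Pr(V=1)$. Subtracting, the $\Pr(V=1)$ terms cancel, leaving $-\Pr(V=1)L_\eta(S)$, which is exactly \eqnref{eq:headroom_decomposition}.

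The hypothesis that the initial proposal supply is fixed enters only to guarantee that $P$, $V$, and hence the partition are determined by the same round-0 multiset $\mathcal M_0(x,S)$ for every readout $\eta$. Under this hypothesis all probabilities are over the same joint distribution of $(x,\text{round-0 randomness},\text{readout randomness})$. I would also add a remark that conditional rates attached to zero-probability cells can be defined arbitrarily, since they are multiplied by zero.
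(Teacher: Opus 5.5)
Your proof is correct and is essentially the paper's argument: the paper writes $\mathcal G_\eta(S)=\Pr(D_\eta=1,V=0)-\Pr(D_\eta=0,V=1)$ and splits the first term over $P$, whereas you expand $\Pr(D_\eta=1)$ over the same three cells and cancel the $\Pr(V=1)$ mass afterward, which is the same bookkeeping. One small correction is that the observation $V=1\Rightarrow P=1$ is not needed anywhere, including in your claim that it identifies the first cell with $\mathcal H(S)$: the events $\{P=1,V=0\}$, $\{P=0,V=0\}$, $\{V=1\}$ partition the space regardless, and $\mathcal H(S)$ is defined directly as $\Pr(P=1,V=0)$, so the observation matters only for the later identity $\mathcal P_{\mathrm{hit}}(S)=\Pr(V=1)+\mathcal H(S)$.
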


The proof is in \appref{app:proof_recovery_damage_decomposition}.
This identity is the main accounting result. A readout beats vote when recovered
headroom plus new-correctness exceeds damage to vote-correct cases. Our main
mechanism claim concerns the first term: LVD should improve the conversion of
surfaced correct minorities into final answers. The second term is reported
separately because it captures correctness not explained by the initial
proposal-hit supply.

\subsection{LVD as Verification-Aware Readout}
\label{subsec:lvd_model}
\label{subsec:lvd_recovery_condition}

\secref{subsec:martingale_null} introduced the symmetric truth-agnostic null.
We now make the corresponding proposal-count baseline explicit with a
Dirichlet-categorical-multinomial (DCM) accounting layer~\citep{choi2026debate}. Note that we use this layer as readout-level bookkeeping over surfaced answer
counts, not as a full behavioral model of each debate agent.

More broadly, this probabilistic readout is inspired by the hierarchical
Bayesian deep learning perspective~\citep{BDL,BDLSurvey,VDI}, in which observed neural signals are coupled with latent probabilistic variables through
Bayesian inference. Here, we apply this perspective at the multi-agent
readout level, treating surfaced proposals and verification evidence as
observations that update a latent distribution over candidate answers.

At round $t$, let $\mathcal Y_t$ be the finite candidate answer set considered by the readout.
For multiple-choice tasks, $\mathcal Y_t$ is the task-defined option set; for
exact-answer tasks, it is the set of surfaced normalized answer clusters in the
visible transcript $\mathcal F_t$. Let $C_t^{(k)}$ be the count of visible
proposals whose normalized answer is $k$, let
$C_t^+=\sum_{k\in\mathcal Y_t}C_t^{(k)}$, and let
$q_\theta^0(k\mid x)$ be the readout model $\theta$'s pre-debate answer prior over
$\mathcal Y_t$. A complete single-round readout with one valid answer from each
of $K$ agents has $C_t^+=K$. With prior mass $\kappa>0$, the proposal-only
posterior and predictive are
\begin{equation}
    \alpha_t^{\mathrm{prop}}(k)
    =
    \kappa q_\theta^0(k\mid x)+C_t^{(k)},
    \qquad
    p_{\mathrm{prop}}(y=k\mid x,\mathcal F_t)
    =
    \frac{\alpha_t^{\mathrm{prop}}(k)}{\kappa+C_t^+}.
    \label{eq:prop_alpha}
\end{equation}
This proposal-only readout treats surfaced answers symmetrically: an answer
gains influence only by appearing more often in the proposal multiset.

Our proposed LVD augments this proposal-count baseline with an effective verified
pseudo-count vector $W_{\eta,t}$~\cite{VIR}. Each $W_{\eta,t}^{(k)}\ge 0$ is the additional
verification-aware evidence assigned to answer $k$ by readout configuration
$\eta$; $W_{\eta,t}$ is a theoretical answer-level abstraction, not an observed
hidden state inside the LLM. The LVD posterior is (see the conjugacy argument in
\appref{app:proof_verification_weighted_conjugacy}).
\begin{equation}
\begin{aligned}
    \alpha_{\mathrm{LVD},\eta}^{(k)}
    &=\kappa q_\theta^0(k\mid x)+C_t^{(k)}+W_{\eta,t}^{(k)},\\
    p_{\mathrm{LVD},\eta}(y=k\mid x,\mathcal F_t,W_{\eta,t})
    &=\frac{\kappa q_\theta^0(k\mid x)+C_t^{(k)}+W_{\eta,t}^{(k)}}
    {\kappa+C_t^+ + W_{\eta,t}^+},
    \qquad
    W_{\eta,t}^+=\sum\nolimits_{k\in\mathcal Y_t}W_{\eta,t}^{(k)}.
    \label{eq:lvd_posterior}
\end{aligned}
\end{equation}
Following the hierarchical Bayesian deep learning perspective
~\citep{BDL,BDLSurvey,NPN}, we can further introduce a latent verification
variable and marginalize it to obtain the effective verification evidence.
One such latent-verification
parameterization, proved in \appref{app:proof_lvd_marginalization}, is
$W_{\eta,t}^{(k)}=\tau_{\eta,t}q_\eta^v(k\mid x,\mathcal F_t)$, where
$q_\eta^v$ is a soft allocation of verification support and $\tau_{\eta,t}$ is
the total effective verification mass. The proposal-only readout is the special
case $W_{\eta,t}^{(k)}=0$ for all $k$.

\begin{proposition}[\textbf{Verification-Directed Positive Drift}]
\label{prop:verification_directed_drift}
Fix $x$ and $\mathcal F_t$, let $y^\star\in\mathcal Y_t$ be the correct answer,
and define $A_t=\kappa+C_t^+$ and
$\alpha_t^\star=\alpha_t^{\mathrm{prop}}(y^\star)$. For a fixed effective
pseudo-count vector with $W_{\eta,t}^+>0$, let
$Q_t^{\mathrm{prop}}:=\alpha_t^\star/A_t$. LVD increases posterior mass on
$y^\star$ relative to the proposal-count baseline if and only if
\begin{equation}
    \frac{W_{\eta,t}^{(y^\star)}}{W_{\eta,t}^+}
    >
    \frac{\alpha_t^\star}{A_t}
    =
    Q_t^{\mathrm{prop}}.
    \label{eq:lvd_positive_drift_ratio}
\end{equation}
\end{proposition}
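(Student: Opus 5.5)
The claim reduces to a direct comparison of two ratios, so the plan is a short algebraic argument with no probabilistic machinery. Write $w^\star:=W_{\eta,t}^{(y^\star)}$ and $W^+:=W_{\eta,t}^+$. By \eqnref{eq:prop_alpha}, the proposal-only posterior mass on the correct answer is $Q_t^{\mathrm{prop}}=\alpha_t^\star/A_t$. By \eqnref{eq:lvd_posterior}, the LVD mass on $y^\star$ is
\[
Q_t^{\mathrm{LVD}}=\frac{\alpha_t^\star+w^\star}{A_t+W^+}.
\]
The statement ``LVD increases posterior mass on $y^\star$'' is then exactly the strict inequality $Q_t^{\mathrm{LVD}}>Q_t^{\mathrm{prop}}$.

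\textbf{Step 1 (positivity of the denominators).} Since $\kappa>0$ and $C_t^+\ge 0$, we have $A_t>0$. By hypothesis $W^+>0$, so also $A_t+W^+>0$. Both denominators are therefore strictly positive.

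\textbf{Step 2 (cross-multiplication).} Because the denominators are positive, the inequality
\[
\frac{\alpha_t^\star+w^\star}{A_t+W^+}>\frac{\alpha_t^\star}{A_t}
\]
can be cross-multiplied without changing direction. It is equivalent to $A_t(\alpha_t^\star+w^\star)>\alpha_t^\star(A_t+W^+)$. The term $A_t\alpha_t^\star$ appears on both sides and cancels, leaving $A_t w^\star>\alpha_t^\star W^+$.

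\textbf{Step 3 (dividing through).} Dividing the last inequality by the positive quantity $A_tW^+$ gives $w^\star/W^+>\alpha_t^\star/A_t$, which is \eqnref{eq:lvd_positive_drift_ratio}. Every step is a reversible equivalence, so the ``if and only if'' follows.

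An equivalent presentation makes the mechanism clearer. The LVD mass is a convex combination,
\[
Q_t^{\mathrm{LVD}}=\lambda\,Q_t^{\mathrm{prop}}+(1-\lambda)\,\frac{w^\star}{W^+},\qquad \lambda=\frac{A_t}{A_t+W^+}\in(0,1).
\]
The LVD mass therefore moves strictly toward the verification share $w^\star/W^+$, and it rises exactly when that share exceeds $Q_t^{\mathrm{prop}}$. I would include this identity because it shows that the verification allocation, rather than its total mass $W^+$, determines the sign of the drift.

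\textbf{Main obstacle.} There is no substantive obstacle. The only care needed is the role of the hypothesis $W^+>0$: it guarantees a strictly positive denominator and a well-defined ratio $w^\star/W^+$. It also rules out the degenerate case $W\equiv 0$, in which the two posteriors coincide and no strict increase is possible.
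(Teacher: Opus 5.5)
Your proof is correct and follows essentially the same argument as the paper. The paper writes $\widetilde Q^{\mathrm{LVD}}_{\eta,t}-Q^{\mathrm{prop}}_t$ over the positive denominator $A_t(A_t+W^+_{\eta,t})$, reduces positivity to $A_tW^{(y^\star)}_{\eta,t}>\alpha^\star_tW^+_{\eta,t}$, and divides by $A_tW^+_{\eta,t}$. Your convex-combination identity with $\lambda=A_t/(A_t+W^+_{\eta,t})$ is correct, is not stated in the paper, and makes clearer that the allocation share, not the total mass, sets the sign.
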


The proof is in \appref{app:proof_verification_directed_drift}.
This is the readout-level analogue of the martingale-null comparison. Symmetric
proposal mixing gives no positive expected drift in correctness mass; LVD
creates non-null drift only when verified pseudo-counts allocate more mass to
$y^\star$ than the proposal-only posterior already does.

\begin{proposition}[\textbf{Correct-Minority Recovery}]
\label{prop:correct_minority_recovery}
On a headroom instance, suppose the proposal-count winner is an incorrect answer
$y_v\ne y^\star$, while $y^\star\in\mathcal Y_t$. LVD ranks the correct minority
above the proposal-count winner if and only if
\begin{equation}
    W_{\eta,t}^{(y^\star)}-W_{\eta,t}^{(y_v)}
    >
    \alpha_t^{\mathrm{prop}}(y_v)-\alpha_t^{\mathrm{prop}}(y^\star).
    \label{eq:lvd_recovery_condition}
\end{equation}
\end{proposition}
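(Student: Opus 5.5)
The plan is to work directly with the LVD posterior in \eqnref{eq:lvd_posterior}. The key observation is that every candidate answer shares the same normalizer $\kappa+C_t^++W_{\eta,t}^+$. This quantity is strictly positive because $\kappa>0$ and all counts and pseudo-counts are nonnegative. Consequently, comparing $p_{\mathrm{LVD},\eta}(y=y^\star\mid x,\mathcal F_t,W_{\eta,t})$ with $p_{\mathrm{LVD},\eta}(y=y_v\mid x,\mathcal F_t,W_{\eta,t})$ is equivalent to comparing the unnormalized concentrations $\alpha_{\mathrm{LVD},\eta}^{(y^\star)}$ and $\alpha_{\mathrm{LVD},\eta}^{(y_v)}$. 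I will interpret ``LVD ranks $y^\star$ above $y_v$'' as strict inequality of these posterior masses. Under any argmax readout, this is the same as the strict order of the concentrations.

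The steps, in order, are as follows.

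\textbf{Step 1.} Record that $x$ and $\mathcal F_t$ are fixed, with $y^\star,y_v\in\mathcal Y_t$. The proposal-count winner $y_v$ is defined through $\alpha_t^{\mathrm{prop}}$ (or equivalently $C_t$; the argument does not depend on which choice is made).

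\textbf{Step 2.} Multiply the strict inequality $p_{\mathrm{LVD},\eta}(y^\star)>p_{\mathrm{LVD},\eta}(y_v)$ by the positive common denominator. This gives $\alpha_{\mathrm{LVD},\eta}^{(y^\star)}>\alpha_{\mathrm{LVD},\eta}^{(y_v)}$. Since multiplication by a positive scalar preserves strict order, the step is reversible.

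\textbf{Step 3.} Substitute $\alpha_{\mathrm{LVD},\eta}^{(k)}=\alpha_t^{\mathrm{prop}}(k)+W_{\eta,t}^{(k)}$, which follows from comparing \eqnref{eq:prop_alpha} with \eqnref{eq:lvd_posterior}. Then rearrange the terms to obtain $W_{\eta,t}^{(y^\star)}-W_{\eta,t}^{(y_v)}>\alpha_t^{\mathrm{prop}}(y_v)-\alpha_t^{\mathrm{prop}}(y^\star)$, which is exactly \eqnref{eq:lvd_recovery_condition}.

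Every step in this chain is an equivalence, so both directions of the ``if and only if'' follow at once.

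I do not expect a real technical obstacle. The only point requiring care is the meaning of ``ranks above'' when there are ties. The statement is about strict ranking, so I will use strict inequalities throughout and note that the tie-breaking rule plays no role. I will also remark on the headroom hypothesis: because $y_v$ is the proposal-count winner, the right-hand side of \eqnref{eq:lvd_recovery_condition} is nonnegative. This explains why recovery requires verification evidence favoring $y^\star$ by at least the proposal deficit, although the equivalence itself does not use that sign. Finally, I will note the connection to \propref{prop:verification_directed_drift}: that proposition concerns the absolute mass on $y^\star$, whereas this proposition concerns its rank relative to $y_v$. The two are proved by the same normalization argument.
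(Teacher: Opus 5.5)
Your proof is correct and is the paper's argument: both compare the LVD numerators through the shared positive denominator $\kappa+C_t^{+}+W_{\eta,t}^{+}$ and then rearrange. Your positivity and strictness remarks only make explicit what the paper leaves implicit. One small correction to a side remark: defining the winner by $C_t$ and defining it by $\alpha_t^{\mathrm{prop}}$ are not equivalent, because the prior term $\kappa q_\theta^0$ can reverse a tie or a close count margin. The right-hand side is therefore guaranteed nonnegative only when $y_v$ maximizes $\alpha_t^{\mathrm{prop}}$, although the equivalence itself is unaffected.
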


The proof is in \appref{app:proof_correct_minority_recovery}.
The right side is the proposal-layer disadvantage of the correct minority,
including both raw proposal counts and the base prior. The left side is its
verified-evidence advantage. Thus LVD recovers a correct minority when
verification-aware evidence overcomes the proposal-layer advantage of the wrong
winner. This does not make debate universally helpful: gains still require
recoverable headroom and enough recovery to outweigh damage to vote-correct
cases.

\section{Theory-Guided System Design}
\label{sec:supply}

\propref{prop:recovery_damage_decomposition}$\sim$\ref{prop:verification_directed_drift} suggest two design components: (1) We should increase proposal-hit supply,
\[
    \mathcal P_{\mathrm{hit}}(S)=\Pr(P(x,S)=1)=\Pr(V=1)+\mathcal H(S),
\]
which can improve vote accuracy and create more vote-missed correct proposals
for the readout to recover. (2) When a correct candidate is surfaced, the readout
should assign it larger verified evidence; otherwise extra supply need not
improve final accuracy. We therefore use coverage-based society construction
for supply and a verification-aware readout configuration.

\subsection{Coverage-Based Society Construction}
\label{subsec:thicket_pool}

The construction step chooses which sampled thicket agents enter the society.
Instead of keeping the $K$ agents with the largest individual training-set
scores $v_m$ from \eqnref{eq:thicket_eval}, we choose a small society
$S\subseteq\gP_N(\theta_0)$ whose members cover complementary proposal hits.
For a pool \(\gP_N\) and budget $K$, the ideal supply target is
\begin{equation}
    \mathcal P_{\mathrm{hit},K}^\star(\gP_N)
    =
    \max\nolimits_{S\subseteq \gP_N,\ |S|=K}\Pr(P(x,S)=1).
    \label{eq:proposal_hit_potential}
\end{equation}
We approximate this target on the construction split, either with labels or with
an agreement proxy.

\subsection{Answer-Coverage and Support-Coverage Greedy}
\label{subsec:proxy_greedy_covering}

Directly optimizing proposal-hit supply on the held-out distribution is
unavailable during construction because it depends on labels. We instead use a
construction-set surrogate. Let
$\mathcal U_{\mathrm{cal}}=\mathcal D_{\mathrm{train}}$ be the construction
split, let $\mathcal P=\gP_N(\theta_0)$ be the sampled agent pool, and let
$a_e(u)$ be agent $e$'s normalized initial answer on question $u$. For a proxy
score $\psi(e,u)\in[0,1]$, the selected society maximizes
\begin{equation}
\begin{aligned}
    S_K &\in \arg\max\nolimits_{S\subseteq\mathcal P,\ |S|=K}
    \widehat{\mathcal C}_\psi(S),\\
    \widehat{\mathcal C}_\psi(S)
    &=\frac{1}{|\mathcal U_{\mathrm{cal}}|}
    \sum\nolimits_{u\in\mathcal U_{\mathrm{cal}}}\max\nolimits_{e\in S}\psi(e,u).
    \label{eq:coverage_objective}
\end{aligned}
\end{equation}
The max makes this a question-level coverage objective: a question contributes
once any selected agent receives a high proxy score. Exact maximization is a
cardinality-constrained maximum-coverage problem, so we use the standard greedy
approximation~\citep{nemhauser1978analysis}. Starting from
$S_0=\emptyset$, step $m$ chooses
\begin{equation}
    e_m\in\arg\max\nolimits_{e\in\mathcal P\setminus S_{m-1}}
    \left[\widehat{\mathcal C}_\psi(S_{m-1}\cup\{e\})
    -\widehat{\mathcal C}_\psi(S_{m-1})\right],
    \label{eq:greedy_step}
\end{equation}
and sets $S_m=S_{m-1}\cup\{e_m\}$ until $m=K$.

\textbf{Answer-Coverage Greedy (AC-Greedy).}
When labels are available, AC-Greedy uses
\begin{equation}
    \psi_{\mathrm{AC}}(e,u)=\mathbf 1\{a_e(u)=y^\star(u)\}.
    \label{eq:answer_coverage_proxy}
\end{equation}
It chooses agents by marginal increase in labeled answer coverage on the
construction split and serves as a label-based construction reference.

\textbf{Support-Coverage Greedy (SC-Greedy).}
SC-Greedy removes the need for labels by computing coverage from agent agreement. For each
construction question $u$, we cluster valid extracted answers. Let $c_e(u)$ be
the cluster containing agent $e$'s answer, and let $\widehat p_u(c)$ be the
fraction of valid extracted answers in cluster $c$:
\[
    \widehat p_u(c)
    =
    \frac{
    |\{e\in\mathcal P:c_e(u)=c\}|
    }{
    |\{e\in\mathcal P:e\text{ has a valid extracted answer on }u\}|
    }.
\]
If no agent has a valid extracted answer on $u$, all support scores for that
question are set to zero. The label-free proxy is
\begin{equation}
    \psi_{\mathrm{SC}}(e,u)=
    \begin{cases}
    \widehat p_u(c_e(u)), & \text{if }e\text{ has a valid extracted answer},\\
    0, & \text{otherwise.}
    \end{cases}
    \label{eq:support_coverage_proxy}
\end{equation}
This score is not a majority-only indicator: it can
still select complementary agents.

\textbf{Verification-Aware Readout.}
We pair the coverage-selected societies with a fixed debate instruction and a
vote-then-judge finalizer. \secref{sec:experiments} tests the observable
predictions of LVD through controlled interventions that hold proposal supply
fixed.

\begin{table*}[t]
    \caption{Accuracy (\%) on Qwen3-4B-Instruct and OLMo-3-7B-Instruct under a unified computation budget. SC and Neural thicket vote rows use \texttt{vote20}; debate rows use round-3 (R3) majority vote from matched five-round, four-agent traces. MMLU-Redux is split into five 60-example subjects. Avg is the unweighted mean over the eight displayed columns, and Avg Ranking is the mean within-block method rank. \textbf{Boldface} and \underline{underline} denote the best and second-best result per column within each method block and model. Gray rows mark our proposed agent pool construction methods.}
    \vspace{0.15em}
    \label{tab:main_results}
    \label{tab:call_aligned_r3}
    \label{tab:olmo_main_r3_call_aligned}
    \centering
    \scriptsize
    \resizebox{1\linewidth}{!}{%
    \setlength{\tabcolsep}{4pt}
    \begin{tabular}{l ccc ccccc cc}
    \toprule
      \multicolumn{1}{c}{\multirow{2}{*}{\textbf{Method}}}
      & \multicolumn{3}{c}{\textbf{Reasoning}}
      & \multicolumn{5}{c}{\textbf{MMLU-Redux}}
      & \multirow{2}{*}{\textbf{Avg}}
      & \multirow{2}{*}{\textbf{Avg Ranking}} \\
      \cmidrule(lr){2-4} \cmidrule(lr){5-9}
      & \textbf{AMC12}
      & \textbf{MATH500}
      & \textbf{GPQA}
      & \textbf{Moral}
      & \textbf{Chem.}
      & \textbf{CS}
      & \textbf{Acct.}
      & \textbf{Med.}
      & & \\
    \midrule
    \rowcolor{black!12}
    \multicolumn{11}{c}{\textbf{Backbone: Qwen3-4B-Instruct}} \\
    \midrule
    \multicolumn{11}{@{}c@{}}{\color{black!45}\hrulefill\quad\textit{Label-Free Methods}\quad\hrulefill} \\
    SC@20
        & 91.20
        & \underline{92.33}
        & 49.13
        & 30.00
        & \underline{70.00}
        & \textbf{81.67}
        & \underline{75.00}
        & \underline{83.33}
        & 71.58
        & 2.38 \\
    Vanilla Debate
        & \textbf{92.69}
        & 91.67
        & \textbf{52.31}
        & \underline{50.00}
        & \textbf{71.67}
        & \underline{80.00}
        & \textbf{76.67}
        & \underline{83.33}
        & \underline{74.79}
        & \textbf{1.81} \\
    \rowcolor{black!6}
    SC-Greedy Debate (Ours)
        & \underline{92.29}
        & \textbf{92.67}
        & \underline{51.45}
        & \textbf{63.33}
        & 68.33
        & \textbf{81.67}
        & 73.33
        & \textbf{86.67}
        & \textbf{76.22}
        & \textbf{1.81} \\
    \midrule
    \multicolumn{11}{@{}c@{}}{\color{black!45}\hrulefill\quad\textit{Label-Based Methods}\quad\hrulefill} \\
    Neural Thicket Vote
        & 90.93
        & \textbf{91.33}
        & 50.00
        & \underline{70.00}
        & 63.33
        & \underline{80.00}
        & \textbf{76.67}
        & \underline{81.67}
        & 75.49
        & \underline{2.19} \\
    Top-4 Debate
        & \underline{92.15}
        & \underline{90.67}
        & \underline{52.31}
        & 68.33
        & \underline{66.67}
        & 78.33
        & \underline{75.00}
        & \textbf{88.33}
        & \underline{76.47}
        & 2.25 \\
    \rowcolor{black!6}
    AC-Greedy Debate (Ours)
        & \textbf{92.29}
        & \textbf{91.33}
        & \textbf{52.60}
        & \textbf{73.33}
        & \textbf{68.33}
        & \textbf{85.00}
        & 73.33
        & 80.00
        & \textbf{77.03}
        & \textbf{1.56} \\
    \midrule
    \rowcolor{black!12}
    \multicolumn{11}{c}{\textbf{Backbone: OLMo-3-7B-Instruct}} \\
    \midrule
    \multicolumn{11}{@{}c@{}}{\color{black!45}\hrulefill\quad\textit{Label-Free Methods}\quad\hrulefill} \\
    SC@20
        & \textbf{87.01}
        & \textbf{88.33}
        & 35.55
        & \underline{25.00}
        & \underline{50.00}
        & 61.67
        & \underline{56.67}
        & 61.67
        & 58.24
        & 2.12 \\
    Vanilla Debate
        & \underline{86.33}
        & \underline{85.33}
        & \textbf{39.31}
        & 23.33
        & 48.33
        & \underline{68.33}
        & \textbf{66.67}
        & \underline{63.33}
        & \underline{60.12}
        & \underline{2.00} \\
    \rowcolor{black!6}
    SC-Greedy Debate (Ours)
        & 84.03
        & 84.00
        & \underline{37.57}
        & \textbf{28.33}
        & \textbf{56.67}
        & \textbf{76.67}
        & 55.00
        & \textbf{70.00}
        & \textbf{61.53}
        & \textbf{1.88} \\
    \midrule
    \multicolumn{11}{@{}c@{}}{\color{black!45}\hrulefill\quad\textit{Label-Based Methods}\quad\hrulefill} \\
    Neural Thicket Vote
        & \textbf{86.87}
        & \underline{84.67}
        & \textbf{37.57}
        & 28.33
        & 51.67
        & \textbf{78.33}
        & \underline{46.67}
        & \textbf{71.67}
        & 60.72
        & \textbf{2.00} \\
    Top-4 Debate
        & \underline{85.93}
        & 84.33
        & \underline{36.42}
        & \textbf{36.67}
        & \underline{56.67}
        & \textbf{78.33}
        & \underline{46.67}
        & \textbf{71.67}
        & \underline{62.09}
        & \textbf{2.00} \\
    \rowcolor{black!6}
    AC-Greedy Debate (Ours)
        & 85.25
        & \textbf{85.00}
        & 35.84
        & \underline{35.00}
        & \textbf{58.33}
        & \underline{73.33}
        & \textbf{56.67}
        & \textbf{71.67}
        & \textbf{62.64}
        & \textbf{2.00} \\
    \bottomrule
    \end{tabular}
    }
    \vskip -0.2cm
\end{table*}

\section{Experiments}
\label{sec:experiments}

We test whether the supply-readout view predicts when debate improves over
vote. We evaluate two main debate backbones, Qwen3-4B-Instruct and
OLMo-3-7B-Instruct~\citep{yang2025qwen3technicalreport, olmo2026olmo3}, on mathematical reasoning (AMC12, MATH500)~\citep{hendrycks2021measuringmathematicalproblemsolving}, scientific
question answering (GPQA)~\citep{rein2024gpqa}, and multi-discipline reasoning (MMLU-Redux)~\citep{hendrycks2021measuring, gema-etal-2025-done}. Each
debate run uses $K=4$ agents and $T=5$ rounds. We compare \acgreedy{} (\textbf{AC-Greedy}) and
\scgreedy{} (\textbf{SC-Greedy}) against baselines including Self-Consistency (\textbf{SC})~\citep{wang2023selfconsistency}, \textbf{Vanilla Debate}~\citep{du2024improving}, \textbf{Neural Thicket Vote}, and
\textbf{Neural Thicket Debate}~\citep{gan2026neuralthicketsdiversetask}
under matched vote/debate budgets. Full details on
dataset splits, thicket-pool construction, decoding parameters, token budgets,
prompts, context visibility, and tie-breaking are in
Appendices~\ref{app:eval_details}--\ref{app:prompts}; complete round-level
statistics are in \appref{app:round_level_statistics}.

\textbf{Diverse Expert Supply for Debate Systems.}
\tabref{tab:main_results} asks whether our proposed coverage-selected thicket agents form
stronger debate societies than vote-only scaling, Vanilla Debate, and top-score
thicket baselines. We use a unified computation budget: vote rows use
\texttt{vote20}, while debate rows use the round-3 (R3) majority state from matched
four-agent traces. Across different backbones, our \scgreedy{} debate has the
best average within the label-free block, and our \acgreedy{} debate has the best
average within the label-based block. The gains are clearest on the
MMLU-Redux split, where coverage-based selection improves several subjects even
when the aggregate average is close.

\textbf{Stochastic Robustness.}
Table~\ref{tab:main_results} retains the submitted method-specific decoding
configuration and its fixed global seed. In a separate robustness experiment,
we rerun Vanilla Debate, SC-Greedy, and AC-Greedy with a common stochastic
configuration (temperature $0.7$, top-$p=0.95$) across five matched generation
seeds. Relative to Vanilla Debate, SC-Greedy and AC-Greedy improve the
eight-column average by $+1.14$ points (paired 95\% CI $[0.60,1.67]$) and
$+1.46$ points ($[0.01,2.91]$) on Qwen3-4B, and by $+0.96$ points
($[0.02,1.90]$) and $+2.01$ points ($[1.32,2.69]$) on OLMo3-7B. These
intervals quantify generation-seed variability rather than uncertainty from
resampling test questions; complete per-benchmark results and paired tests are
reported in Appendix~\ref{app:stochastic_robustness}.

\textbf{Can Coverage-Based Selection Transfer Out of Domain?}
\phantomsection\label{subsec:ood_transfer}
We next ask whether coverage-selected societies only improve the domain used
for selection, or whether their complementarity transfers out of domain. We
select agents on AMC12 and evaluate the same selected societies on MMLU-Redux
without reselecting agents. This setting tests whether diverse proposal supply
selected from one reasoning domain remains useful under a multi-discipline
target shift.

\begin{table*}[t]
    \caption{AMC12-selected seed transfer to MMLU-Redux. Init, R3, and R5 report
    pre-debate, round-3, and round-5 metrics over the same four agents. Final is
    the R5 vote-then-judge readout; Lift is Final minus Init Vote. Values are
    percentages except $N$. Gray rows mark our greedy construction methods.}
    \vspace{0.15em}
    \label{tab:amc12_mmlu_transfer_r5}
    \centering
    \scriptsize
    \resizebox{1\linewidth}{!}{
    \setlength{\tabcolsep}{4pt}
    \begin{tabular}{lrrrrrrrrr}
    \toprule
      \textbf{Method}
      & \textbf{N}
      & \textbf{Init Vote}
      & \textbf{R3 Vote}
      & \textbf{R5 Vote}
      & \textbf{Final}
      & \textbf{Init Hit}
      & \textbf{R3 Hit}
      & \textbf{R5 Hit}
      & \textbf{Lift} \\
    \midrule
    \rowcolor{black!12}
    \multicolumn{10}{c}{\textbf{Backbone: Qwen3-4B-Instruct, AMC12 $\rightarrow$ MMLU-Redux}} \\
    \midrule
    \rowcolor{black!6}
    SC-Greedy Debate (Ours)
        & 300 & 72.33 & \underline{75.67} & \underline{76.33} & \underline{76.33}
        & \textbf{84.00} & \underline{77.67} & 76.33 & \textbf{+4.00} \\
    Top-4 Debate
        & 300 & \underline{72.67} & 75.00 & 74.67 & 75.33
        & \textbf{84.00} & \underline{77.67} & \underline{76.67} & \underline{+2.67} \\
    \rowcolor{black!6}
    AC-Greedy Debate (Ours)
        & 300 & \textbf{75.33} & \textbf{77.33} & \textbf{77.00} & \textbf{76.67}
        & \underline{83.67} & \textbf{78.00} & \textbf{78.00} & +1.33 \\
    \midrule
    \rowcolor{black!12}
    \multicolumn{10}{c}{\textbf{Backbone: OLMo-3-7B-Instruct, AMC12 $\rightarrow$ MMLU-Redux}} \\
    \midrule
    \rowcolor{black!6}
    SC-Greedy Debate (Ours)
        & 300 & \underline{55.33} & \textbf{56.33} & \textbf{58.33} & \textbf{58.33}
        & \textbf{70.00} & \textbf{59.67} & \textbf{59.33} & \textbf{+3.00} \\
    Top-4 Debate
        & 300 & \textbf{56.00} & \textbf{56.33} & 56.33 & \underline{56.67}
        & 67.00 & 57.33 & 57.00 & +0.67 \\
    \rowcolor{black!6}
    AC-Greedy Debate (Ours)
        & 300 & 54.00 & \textbf{56.33} & \underline{56.67} & 56.33
        & \underline{69.33} & \underline{58.00} & \underline{57.33} & \underline{+2.33} \\
    \bottomrule
    \end{tabular}
    }
    \vskip -0.1cm
\end{table*}

\tabref{tab:amc12_mmlu_transfer_r5} shows positive transfer for the proposed
coverage-selected societies, but the stronger construction varies by backbone.
On Qwen3-4B-Instruct, our \acgreedy{} achieves the best final accuracy, while
our \scgreedy{} achieves the largest improvement over the initial vote. On
OLMo-3-7B-Instruct, \scgreedy{} achieves both the best final accuracy and the
largest improvement. This suggests that coverage selection can capture reusable agent
complementarity, although transfer is not uniform across backbones. The
headroom, recovery, and damage decomposition for these transfer runs is in
Appendix~\ref{app:secondary_construction_diagnostics}.

\begin{figure}[t]
\centering
\includegraphics[width=\textwidth]{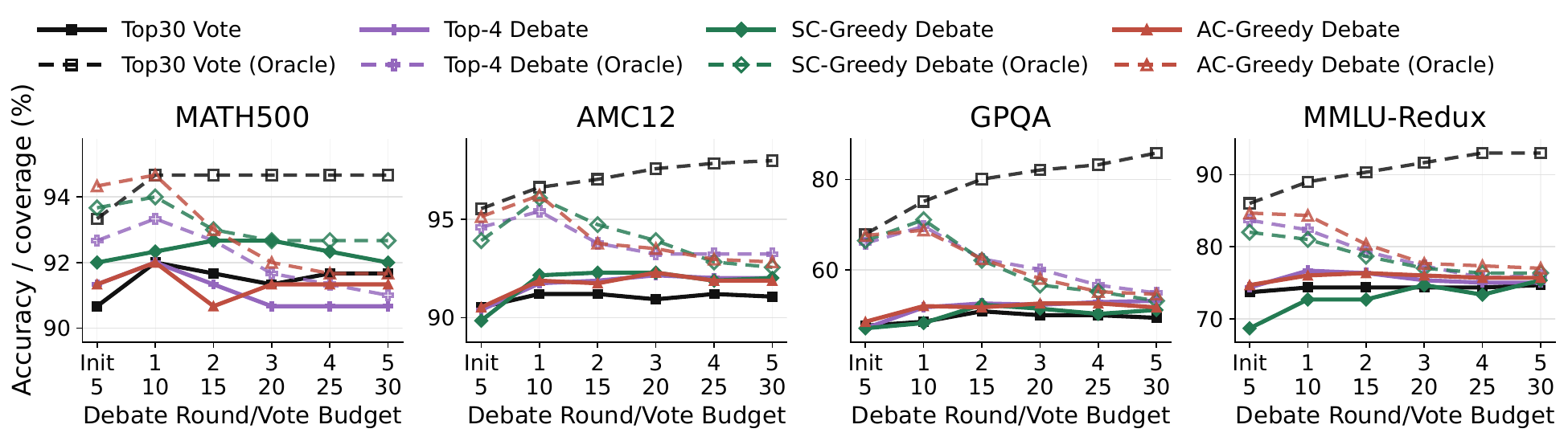}
\caption{Label-based five-round dynamics. Solid lines show round-level accuracy before finalization; dashed hollow legend entries
marked ``Oracle'' show proposal-hit coverage. Later rounds can improve vote
accuracy while reducing answer diversity, so final gains depend on recovering
correct proposals before coverage collapses. The label-free counterpart is in
\figref{fig:r5_debate_dynamics_label_free}.}
\label{fig:r5_debate_dynamics_label_based}
\vskip -0.2cm
\end{figure}

\textbf{Dynamics and Mechanism Analysis.}
\figref{fig:r5_debate_dynamics_label_based} shows that early rounds can contain
more proposal-hit coverage than majority vote uses, while later rounds increase
consensus but reduce the alternatives. To isolate interaction, we apply the
same vote-then-judge finalizer independently at every saved round. Across four
societies and five benchmarks, R0 accuracy rises only from 76.09\% to 76.45\%,
whereas the R2 and R3 states reach 78.34\% and 78.52\%---1.89 and 2.07 points
beyond the R0 readout. An immediate finalizer therefore cannot explain the
R2--R3 plateau. On OLMo MATH500, all four societies retain 3.33--6.33 points
of headroom but have negative final lift, identifying a conversion bottleneck.
Appendix~\ref{app:dynamics_conversion_controls} gives both controls, and
\appref{app:round_level_statistics} gives complete recovery and damage results.

\phantomsection
\label{subsec:controlled_readout_probes}
\textbf{Controlled Fixed-Proposal Readout Probes.}
We directly test whether evidence assigned to surfaced candidates changes
answer selection while proposal supply remains fixed. For each
backbone--dataset pair, we uniformly sample 50 traces at random from the saved
traces whose initial proposal set contains at least one correct and one
incorrect answer. Within each trace, we hold the question, target agent,
initial response, prompt style, and candidate set fixed. We vary only the peer
evidence supplied to the target agent.

We first estimate the effect of an original correct peer response in
equivalent-support units. Let $N$ denote a neutral peer context, $G_d$ a context
containing $d\in\{1,2,3\}$ short messages supporting the gold candidate $y_g$,
and $T$ the original context containing a correct peer response from the saved
trace. For a designated wrong candidate $y_w$, we measure
\[
m(x)=\log p(y_g\mid x)-\log p(y_w\mid x).
\]
The controlled shift is $D_d=m(G_d)-m(N)$, and the original trace-response
shift is $\Delta m_{\mathrm{trace}}=m(T)-m(N)$. For each recovery case, where
the target initially answers incorrectly while a correct peer proposal is
available, we interpolate the observed $d=0,1,2,3$ dose-response curve to find
$\widehat W$, the controlled-support dose that matches
$\Delta m_{\mathrm{trace}}$. We treat $\widehat W$ as an operational
equivalent-support measure, not a literal internal counter.

We next test whether the effect depends on evidential content rather than
repetition of an answer. From the saved traces, we construct matched correct
and wrong peer rationales, remove all stated answers and option references, and
match each pair in length. We then compare the gold-versus-other-answer margin
after the correct rationale, $T$, and the wrong rationale, $F$. A positive
$T-F$ means that the answer-stripped correct reasoning leaves the model more
favorable to the gold answer than equally long wrong reasoning.

Finally, we test whether these probability shifts alter generated decisions.
From the same fixed mixed-proposal states, we generate fresh answers after
$d\in\{1,2,3\}$ peer messages under correct-support and matched control
conditions. The controls provide no candidate evidence, withhold the supported
answer, or insert an answer-stripped rationale from another question.
\tabref{tab:lvd_controlled_probes} reports the original trace
effect, equivalent support, answer-stripped correct-over-wrong gap, and
free-generation accuracy gain at $d=3$.

\begin{table*}[t]
\centering
\caption{Controlled fixed-proposal evidence for verification-aware readout.
Trace effect is the answer-margin shift from an original correct peer response
relative to neutral context; $\widehat W$ is its median equivalent controlled
support over recovery cases. $T-F$ is the gold-margin difference between
answer-stripped correct and wrong rationales. $\Delta$Acc is correct-support
accuracy minus neutral accuracy at $d=3$. All entries use the main readout
configuration; secondary configuration controls are reported in the appendix.}
\label{tab:lvd_controlled_probes}
\scriptsize
\setlength{\tabcolsep}{4pt}
\resizebox{\textwidth}{!}{%
\begin{tabular}{llcccc}
\toprule
\textbf{Backbone} & \textbf{Dataset}
& $\boldsymbol{\Delta m_{\mathrm{trace}}}$ \textbf{[95\% CI]}
& \textbf{Median $\boldsymbol{\widehat W}$}
& \textbf{Answer-stripped $\boldsymbol{T-F}$}
& $\boldsymbol{\Delta}$\textbf{Acc at $\boldsymbol{d=3}$} \\
\midrule
Qwen3-4B & GPQA       & 8.15 [4.96, 11.41]  & 2.23 & 4.59 & +.14 \\
Qwen3-4B & MMLU-Redux & 11.40 [9.02, 13.97] & $>3$ & 5.50 & +.04 \\
OLMo3-7B & GPQA       & 4.57 [3.65, 5.50]   & 1.92 & 2.01 & +.56 \\
OLMo3-7B & MMLU-Redux & 5.47 [4.52, 6.43]   & $>3$ & 2.10 & +.58 \\
\bottomrule
\end{tabular}%
}
\end{table*}

Original correct peer responses have positive margin effects in all four
backbone--dataset combinations. Their median effects correspond to roughly two
controlled messages on GPQA and more than three on MMLU-Redux. The
answer-stripped $T-F$ gap is also positive in every backbone--dataset
combination. At $d=3$, correct support improves accuracy over neutral context
in all four combinations; averaged across them, accuracy rises from .325 to
.655. Because the proposal set never changes, these effects isolate
truth-sensitive readout rather than proposal supply. Full intervention tables
and configuration controls are in \appref{app:controlled_readout_probes} and
\appref{app:prompt_control_tables}.

\section{Related Work}
\label{sec:related_work}

\textbf{Debate, Vote, and Null Models.}
Multi-agent debate asks several language-model agents to propose, critique, and
revise answers before a final decision~\citep{du2024improving, chanchateval, xiong2023examining, chen2024reconcile}. Empirical
results vary across tasks and aggregation rules~\citep{smit2024should,wang2024rethinking,estornell2024multi,zhang2025if}, and majority vote remains a
strong baseline~\citep{wang2023selfconsistency,kaesberg-etal-2025-voting}. Recent theory
formalizes a symmetric, truth-agnostic martingale null for debate-like
interaction~\citep{choi2026debate}. We use that result as a proposal-count
baseline and show how verification-directed pseudo-counts can create
readout-level positive drift.

\textbf{Verification and Judging.}
Our readout model connects to work on verifiers, LLM-as-a-judge methods~\citep{tan2025judgebenchbenchmarkevaluatingllmbased, zhou2025evaluatingjudgesevaluatorsjetts},
critique, and reranking~\citep{cobbe2021trainingverifierssolvemath, lightman2023letsverifystepstep, li-etal-2023-making, zheng2023judgingllmasajudgemtbenchchatbot, NEURIPS2023_91edff07}.
OrchRM uses execution artifacts to learn rewards for orchestrator training and
test-time scaling~\citep{tsang2026rewardmodelingmultiagentorchestration}.
LVD places answer-specific verification within debate, where it changes
proposal pseudo-counts before final generation.
Our probabilistic treatment also draws on hierarchical Bayesian deep
learning~\citep{BDL,BDLSurvey}, while placing the latent probabilistic
structure at the multi-agent readout level.

\textbf{Neural Thickets and Coverage Selection.}
Neural thickets show that nearby perturbations of a base model can form an
agent society (pool) with heterogeneous strengths~\citep{gan2026neuralthicketsdiversetask,qiu2026evolutionstrategiesscalellm}. We use
this pool as the proposal-supply side of debate. Our coverage constructor
selects complementary specialists from the thicket rather than ranking agents
only by individual average accuracy. This connects to coverage objectives for
budgeted subset selection~\citep{nemhauser1978analysis}.
Our coverage signal can be supervised, using true correctness on a construction
set, or label-free, using answer-cluster support.

\section{Conclusion}

Debate helps when proposal supply and readout succeed together. Coverage-based neural-thicket construction increases complementary correct proposals, while LVD uses answer-specific evidence to improve their readout. Controlled experiments confirm truth-sensitive readout effects, and matched stochastic evaluations show gains over Vanilla Debate across both backbones, though headroom conversion remains task- and backbone-dependent.

These results do not imply that debate is generally better than voting. Interaction can provide additional gains but may also discard useful alternatives or damage vote-correct cases. Its practical value therefore depends on proposal headroom, the backbone's ability to evaluate competing evidence, and inference cost. See
\appref{appsec:limitation} for detailed limitations.

\begin{ack}
We thank all reviewers, the area chair, and the senior area chair for their
valuable comments.

\textbf{Funding.}
HW is supported by the Amazon Faculty Research Award, Microsoft AI \& Society
Fellowship, Nokia Faculty Research Award, NSF CAREER Award IIS-2340125,
NIH grant R01CA297832, and NSF grant IIS-2127918.

\textbf{Competing interests.}
The authors declare no competing interests.
\end{ack}

\bibliographystyle{abbrv}
\bibliography{references,debate}

\appendix

\section{Additional Details}

\subsection{Implementation Details for \figref{fig:motivating_observations}}
\label{app:motivating_observations}

Panel (a) uses four-agent, five-round \texttt{standard} Vanilla Debate and scores
every saved round with the same \texttt{vote\_then\_judge\_tiebreak} readout.
For Qwen2.5-3B-Instruct, the curve averages over AMC12, MATH500, GPQA, and
MMLU-Redux. For Qwen3-4B-Instruct, the curve uses the
matched MMLU-Redux and GPQA diagnostics. The plotted value is the per-round
readout accuracy minus the R0 majority-vote accuracy, in percentage points.

Panel (b) reports the unweighted average over AMC12, MATH500, GPQA, and
MMLU-Redux for the label-free thicket society. The Vote bar is R0
majority-vote accuracy, the Debate readout bar is the final five-round debate
accuracy, and the Proposal-hit bar is the R0 proposal-hit@4 rate for the same
selected society.

Panel (c) uses the matched Qwen3-4B-Instruct five-round cells for sample,
answer-coverage, support-coverage, and top-accuracy societies across the same
five benchmarks. The x-axis is $P_0 - V_0$, where $P_0$ is the R0 proposal-hit
rate and $V_0$ is the R0 majority-vote accuracy. The y-axis is $F - V_0$, where
$F$ is the final debate readout accuracy. The reported correlations are Pearson
$r=0.77$ and Spearman $\rho=0.82$ across the 20 cells.

\subsection{Evaluation Details}
\label{app:eval_details}

The main runs use data-specific generation budgets: 8192 tokens for AMC12 and
MATH500, and 4096 tokens for GPQA and MMLU-Redux. We keep
decoding fixed within each matched comparison. Self-consistency and Vanilla
Debate sample the base model with temperature 0.7 and top-$p$ 0.95. Thicket
agents and the tie-break judge use deterministic decoding with temperature 0
and top-$p$ 1.0. All runs use global seed 42; agent-level sampling seeds are
deterministic offsets from that seed.

The main experiments use the verification debate prompt. Each debate-round
prompt shows an agent its own previous response and the clipped previous-round
responses from the other agents, asks it to verify the peer answers, and then
asks for a new final answer in the dataset's required format. The standard
prompt-control ablation instead asks agents to compare peer solutions and
revise if needed. The tie-break judge prompt sees the final candidate responses
and is used only when the final vote has no unique winner.

\subsection{Compute Resources}
\label{app:compute_resources}

All experiments are inference-only: we do not train or fine-tune either
backbone. We use BF16 inference with vLLM on an internal cluster. The reported
runs used NVIDIA A100 GPUs with 80~GB memory and NVIDIA RTX PRO 6000 Blackwell
Server Edition GPUs with 96~GB memory. Each downstream evaluation or diagnostic
uses one GPU, while seed profiling distributes independent perturbed models
over 6--8 GPUs. Table~\ref{tab:compute_resources} reports observed runtime from
successful-run logs; GPU-hours are the number of allocated GPUs multiplied by
wall-clock time.

\begin{table}[H]
\centering
\caption{Compute used by the reported experiment families. Profiling times vary
with dataset size and output length; downstream times vary with the backbone,
dataset, and number of debate rounds.}
\label{tab:compute_resources}
\small
\begin{tabular}{p{0.34\textwidth}p{0.18\textwidth}p{0.39\textwidth}}
\toprule
Experiment family & Workers per run & Observed wall time and compute \\
\midrule
Qwen seed profiling (500 agents per dataset)
    & 6--8 GPUs
    & 3.3--18.4 hours per dataset; approximately 27--110 GPU-hours \\
OLMo seed profiling (300 retained agents per dataset)
    & 8 GPUs
    & 1.3--6.6 hours per dataset; about 10--53 GPU-hours \\
Five-round downstream and prompt-control evaluation
    & 1 GPU per job
    & 0.2--4.9 hours per job \\
Five-run stochastic robustness experiment
    & 1 GPU per job; 120 jobs
    & 156.8 GPU-hours in total \\
Controlled mechanism, readout, and transfer diagnostics
    & 1 GPU per job
    & Within the same 0.2--4.9-hour single-job envelope \\
\bottomrule
\end{tabular}
\end{table}

Summing successful-run wall times and GPU allocations gives an estimated
\textbf{0.9k GPU-hours} for the experiments reported in this paper, with a
plausible range of 0.8--1.0k GPU-hours. The range reflects early Qwen profiling
jobs that were paused and resumed before scheduler-level accounting was
retained; the 156.8 GPU-hours for the stochastic robustness experiment are
measured directly from complete logs. Queue time and CPU-only analysis are not
included. Retained run artifacts occupy approximately 12~GB, excluding cached
model weights. The broader research project used additional compute for
preliminary sweeps, failed or resumed jobs, and experiments on backbones not
reported here; that exploratory compute is not included in this estimate.

\subsection{Existing-Asset Licenses}
\label{app:asset_licenses}

Table~\ref{tab:asset_licenses} records the versions and terms of the model and
dataset assets used in the paper. We use these assets only for research
evaluation and do not redistribute model weights or raw benchmark questions.

\begin{table}[H]
\centering
\caption{Licenses and stated terms for existing model and dataset assets.}
\label{tab:asset_licenses}
\footnotesize
\begin{tabular}{p{0.31\textwidth}p{0.20\textwidth}p{0.40\textwidth}}
\toprule
Asset and version & License or terms & Use in this work \\
\midrule
\href{https://huggingface.co/Qwen/Qwen3-4B-Instruct-2507}{Qwen3-4B-Instruct-2507}
    & Apache-2.0 & Main 4B backbone; research evaluation only \\
\href{https://huggingface.co/allenai/Olmo-3-7B-Instruct}{OLMo-3-7B-Instruct}
    & Apache-2.0; Ai2 Responsible Use Guidelines
    & Main 7B backbone; research evaluation only \\
\href{https://huggingface.co/Qwen/Qwen2.5-3B-Instruct}{Qwen2.5-3B-Instruct}
    & Qwen Research License
    & Motivating experiment; non-commercial research evaluation \\
\href{https://huggingface.co/datasets/Idavidrein/gpqa}{GPQA}
    & CC BY 4.0 & Gated benchmark; examples are not redistributed \\
\href{https://huggingface.co/datasets/edinburgh-dawg/mmlu-redux-2.0}{MMLU-Redux 2.0}
    & CC BY 4.0 & Benchmark evaluation \\
\href{https://huggingface.co/datasets/Hothan/OlympiadBench}{OlympiadBench}
    & Apache-2.0 & Benchmark evaluation \\
\href{https://huggingface.co/datasets/HuggingFaceH4/MATH-500}{MATH-500}
    & Mirror declares no license; source MATH and PRM800K repositories are MIT
    & We use the published split without redistributing it \\
\href{https://huggingface.co/datasets/edev2000/amc12-full}{AMC12-full}
    & No open license declared; original problems are copyright MAA
    & Dataset card states research/educational use; questions are not redistributed \\
\bottomrule
\end{tabular}
\end{table}

\subsection{Five-Run Stochastic Robustness}
\label{app:stochastic_robustness}
We repeat the three main-table methods involved in the coverage comparison---
Vanilla Debate, SC-Greedy, and AC-Greedy---for five matched generation runs on
both backbones and all eight benchmark columns. The original full test sets,
model, society, prompts, and finalizer remain fixed. For this robustness
experiment only, all three methods use temperature $0.7$ and top-$p=0.95$; the
generation seed varies across runs and is shared across methods within each
run. This common stochastic configuration differs from the submitted
configuration in Table~\ref{tab:main_results}, where thicket agents use
deterministic decoding and all methods use one fixed global seed.

Table~\ref{tab:stochastic_robustness} reports mean $\pm$ standard deviation
across the five runs. We compute the eight-column average within each run
before summarizing across runs. Paired 95\% confidence intervals and two-sided
$p$-values use paired $t$-tests on these five run-level averages. Thus, the
intervals measure sensitivity to generation randomness under stochastic
decoding; they do not include uncertainty from resampling test questions.

\begin{table*}[t]
\centering
\caption{Five-run stochastic robustness under a common decoding configuration.
Panel (a) reports mean $\pm$ SD across five generation seeds; bold marks the
highest mean within each backbone and column, not statistical significance.
Panel (b) reports paired comparisons of the five matched run-level
eight-column averages. Effects and confidence intervals are in percentage
points. Comparisons near the nominal $p=.05$ threshold should be interpreted
cautiously.}
\label{tab:stochastic_robustness}
\vspace{0.3em}
\textbf{(a) Per-benchmark accuracy and eight-column average.}\\[0.25em]
\tiny
\resizebox{\textwidth}{!}{%
\begin{tabular}{llrrrrrrrrr}
\toprule
Backbone & Method & AMC12 & MATH500 & GPQA & Chem. & CS & Med. & Moral & Acct. & 8-col. Avg \\
\midrule
\multirow{3}{*}{Qwen3-4B}
 & Vanilla Debate & \textbf{92.56 $\pm$ 0.21} & 91.40 $\pm$ 0.55 & 50.40 $\pm$ 1.01 & \textbf{71.00 $\pm$ 3.65} & 81.00 $\pm$ 1.49 & 83.33 $\pm$ 0.00 & 67.67 $\pm$ 1.49 & 76.33 $\pm$ 1.39 & 76.71 $\pm$ 1.10 \\
 & SC-Greedy & \textbf{92.56 $\pm$ 0.17} & \textbf{92.27 $\pm$ 0.43} & 51.62 $\pm$ 0.48 & 70.33 $\pm$ 3.80 & 84.00 $\pm$ 0.91 & \textbf{86.00 $\pm$ 0.91} & 69.67 $\pm$ 1.39 & 76.33 $\pm$ 2.17 & 77.85 $\pm$ 1.19 \\
 & AC-Greedy & 92.21 $\pm$ 0.23 & 91.93 $\pm$ 0.49 & \textbf{52.20 $\pm$ 0.90} & 70.00 $\pm$ 1.18 & \textbf{84.33 $\pm$ 2.24} & 84.67 $\pm$ 1.39 & \textbf{73.00 $\pm$ 1.39} & \textbf{77.00 $\pm$ 1.39} & \textbf{78.17 $\pm$ 1.08} \\
\midrule
\multirow{3}{*}{OLMo3-7B}
 & Vanilla Debate & \textbf{86.82 $\pm$ 0.68} & 85.80 $\pm$ 1.02 & 35.90 $\pm$ 0.24 & 60.67 $\pm$ 2.24 & 71.67 $\pm$ 4.08 & 72.67 $\pm$ 0.91 & 28.67 $\pm$ 3.80 & 57.00 $\pm$ 1.83 & 62.40 $\pm$ 1.72 \\
 & SC-Greedy & 86.60 $\pm$ 0.59 & 85.60 $\pm$ 0.86 & \textbf{36.99 $\pm$ 1.78} & 59.00 $\pm$ 1.90 & 74.33 $\pm$ 2.24 & 74.33 $\pm$ 0.91 & 32.67 $\pm$ 4.35 & 57.33 $\pm$ 1.90 & 63.36 $\pm$ 1.69 \\
 & AC-Greedy & 86.39 $\pm$ 0.89 & \textbf{86.13 $\pm$ 0.38} & 36.71 $\pm$ 0.87 & \textbf{62.33 $\pm$ 1.49} & \textbf{77.33 $\pm$ 1.90} & \textbf{75.00 $\pm$ 1.18} & \textbf{33.00 $\pm$ 3.61} & \textbf{58.33 $\pm$ 2.04} & \textbf{64.40 $\pm$ 1.39} \\
\bottomrule
\end{tabular}%
}

\vspace{0.7em}
\textbf{(b) Paired comparisons of run-level averages.}\\[0.25em]
\scriptsize
\begin{tabular}{llrrr}
\toprule
Backbone & Comparison & $\Delta$ Avg & Paired 95\% CI & $p$-value \\
\midrule
\multirow{3}{*}{Qwen3-4B}
 & SC-Greedy $-$ Vanilla Debate & $+1.135$ & $[0.599, 1.671]$ & $0.0042$ \\
 & AC-Greedy $-$ Vanilla Debate & $+1.455$ & $[0.005, 2.905]$ & $0.0495$ \\
 & AC-Greedy $-$ SC-Greedy & $+0.320$ & $[-1.225, 1.865]$ & $0.5961$ \\
\midrule
\multirow{3}{*}{OLMo3-7B}
 & SC-Greedy $-$ Vanilla Debate & $+0.960$ & $[0.022, 1.898]$ & $0.0468$ \\
 & AC-Greedy $-$ Vanilla Debate & $+2.005$ & $[1.320, 2.691]$ & $0.0012$ \\
 & AC-Greedy $-$ SC-Greedy & $+1.045$ & $[0.209, 1.881]$ & $0.0256$ \\
\bottomrule
\end{tabular}
\end{table*}

Both coverage-selected methods have higher aggregate accuracy than Vanilla
Debate on both backbones in these matched runs. All four comparisons against
Vanilla Debate have nominal $p<.05$, although the Qwen AC-Greedy interval and
the OLMo SC-Greedy interval lie close to zero. AC-Greedy and SC-Greedy are not
distinguishable on Qwen3-4B, while AC-Greedy has the higher run-level average
on OLMo3-7B.

\subsection{Dataset and Split Details}
\label{app:dataset_split_details}

The main call-aligned table evaluates AMC12, MATH500, GPQA, and MMLU-Redux.
AMC12 uses 360 training questions for construction and 739 held-out test
questions. MATH500 uses 200 training questions and 300 held-out test questions.
GPQA uses 200 training questions and 346 held-out test questions. MMLU-Redux
uses five subjects, with 40 training questions and 60 held-out test questions
per subject, for 300 test questions in total. The five-round dynamics in the
diagnostic appendix also include OlympiadBench, with 200 training questions and
474 held-out test questions.

\subsection{Seed Profiling Details}
\label{app:seed_profiling_details}

Table~\ref{tab:seed_profiling_details} records the perturbation scales used to
construct the main-table thicket pools. Qwen3-4B-Instruct uses a 500-seed
legacy pool. A fixed global random seed sampled both the perturbation seeds and
their scales, so the realized counts across the three retained scales are 160,
174, and 166. OLMo samples 100 seeds at each listed scale. Its diagnostic sweeps
also included smaller and larger scales, but the final downstream experiments
use the middle three scales: $5\times 10^{-4}$, $10^{-3}$, and
$2\times 10^{-3}$.
For top-$K$ thicket baselines, the scalar score is training accuracy:
\begin{equation}
    v_m
    =
    \frac{1}{|\mathcal D_{\mathrm{train}}|}
    \sum\nolimits_{(x,y^\star)\in\mathcal D_{\mathrm{train}}}
    \mathbf 1\{\operatorname{ans}(e_m,x)=y^\star\},
    \label{eq:thicket_eval}
\end{equation}
where $\operatorname{ans}(e_m,x)$ is the normalized answer from agent $e_m$.

\begin{table}[t]
\centering
\caption{Seed-quality diagnostics for the main-table backbones. Values are
percentages on the profiling split. Base ref. is the same-split unperturbed
pass@1 reference. The Used column marks scales retained for downstream
construction.}
\scriptsize
\resizebox{\textwidth}{!}{
\begin{tabular}{llrrrrrrr}
\toprule
Backbone & Profiling split & $\sigma$ & Seeds & Base ref. & Seeds $>$ base & Mean & Best & Used \\
\midrule
Qwen3-4B & MATH500 first200 & 0.0005 & 160 & 90.00 & 59/160 & 89.82 & 92.50 & yes \\
Qwen3-4B & MATH500 first200 & 0.001 & 174 & 90.00 & 51/174 & 89.58 & 92.50 & yes \\
Qwen3-4B & MATH500 first200 & 0.002 & 166 & 90.00 & 11/166 & 88.10 & 91.00 & yes \\
Qwen3-4B & AMC12 train360 & 0.0005 & 160 & 84.17 & 5/160 & 82.40 & 85.83 & yes \\
Qwen3-4B & AMC12 train360 & 0.001 & 174 & 84.17 & 7/174 & 81.96 & 86.11 & yes \\
Qwen3-4B & AMC12 train360 & 0.002 & 166 & 84.17 & 0/166 & 78.97 & 83.33 & yes \\
Qwen3-4B & GPQA train200 & 0.0005 & 160 & 48.00 & 79/160 & 48.13 & 53.00 & yes \\
Qwen3-4B & GPQA train200 & 0.001 & 174 & 48.00 & 69/174 & 47.73 & 54.50 & yes \\
Qwen3-4B & GPQA train200 & 0.002 & 166 & 48.00 & 13/166 & 44.64 & 52.00 & yes \\
Qwen3-4B & MMLU-Redux train200 & 0.0005 & 160 & 71.00 & 32/160 & 70.00 & 74.00 & yes \\
Qwen3-4B & MMLU-Redux train200 & 0.001 & 174 & 71.00 & 23/174 & 68.92 & 76.50 & yes \\
Qwen3-4B & MMLU-Redux train200 & 0.002 & 166 & 71.00 & 28/166 & 67.47 & 75.00 & yes \\
\midrule
OLMo-3-7B & MATH500 random200 & 0.0005 & 100 & 78.50 & 91/100 & 80.22 & 82.50 & yes \\
OLMo-3-7B & MATH500 random200 & 0.001 & 100 & 78.50 & 87/100 & 80.31 & 83.50 & yes \\
OLMo-3-7B & MATH500 random200 & 0.002 & 100 & 78.50 & 83/100 & 80.07 & 83.00 & yes \\
OLMo-3-7B & AMC12 train360 & 0.0005 & 100 & 75.56 & 6/100 & 72.89 & 76.67 & yes \\
OLMo-3-7B & AMC12 train360 & 0.001 & 100 & 75.56 & 6/100 & 72.90 & 76.67 & yes \\
OLMo-3-7B & AMC12 train360 & 0.002 & 100 & 75.56 & 0/100 & 71.01 & 75.56 & yes \\
OLMo-3-7B & GPQA train200 & 0.0005 & 100 & 33.00 & 9/100 & 30.26 & 36.00 & yes \\
OLMo-3-7B & GPQA train200 & 0.001 & 100 & 33.00 & 17/100 & 30.71 & 36.50 & yes \\
OLMo-3-7B & GPQA train200 & 0.002 & 100 & 33.00 & 13/100 & 30.46 & 37.00 & yes \\
OLMo-3-7B & MMLU-Redux train200 & 0.0005 & 100 & 50.00 & 4/100 & 48.09 & 52.00 & yes \\
OLMo-3-7B & MMLU-Redux train200 & 0.001 & 100 & 50.00 & 8/100 & 47.92 & 52.50 & yes \\
OLMo-3-7B & MMLU-Redux train200 & 0.002 & 100 & 50.00 & 9/100 & 47.16 & 53.00 & yes \\
\bottomrule
\end{tabular}
}
\label{tab:seed_profiling_details}
\end{table}

\subsection{Prompt Templates}
\label{app:prompts}

\textbf{Initial Answer Prompt.}
\phantomsection
\label{app:prompt_initial_answer}
For MATH500, the runs use the following system instruction and append the same
answer-format requirement to the user question:
\begin{promptbox}{MATH500 Initial Prompt}
\small
\textbf{System message.}
Let's think step by step. End with exactly one final answer in the form
\texttt{\textbackslash boxed\{your\_answer\}} and nothing else after the boxed answer.

\medskip
\textbf{User message.}
\emph{[Problem text]}

Let's think step by step. End with exactly one final answer in the form
\texttt{\textbackslash boxed\{your\_answer\}} and nothing else after the boxed answer.
\end{promptbox}
For AMC12, the answer space is A--E:
\begin{promptbox}{AMC12 Initial Prompt}
\small
\textbf{System message.}
Let's think step by step. End with exactly one final answer tag in the form
\texttt{<answer>X</answer>}, where $X$ is A, B, C, D, or E. Do not write
anything after the final answer tag.

\medskip
\textbf{User message.}
\emph{[Problem text]}

Return the single best answer as \texttt{<answer>X</answer>} where $X$ is one
of A, B, C, D, E.
\end{promptbox}
For GPQA and MMLU-Redux, the answer space is A--D:
\begin{promptbox}{GPQA and MMLU-Redux Initial Prompt}
\small
\textbf{System message.}
Let's think step by step. End with exactly one final answer tag in the form
\texttt{<answer>X</answer>}, where $X$ is A, B, C, or D. Do not write anything
after the final answer tag.

\medskip
\textbf{User message.}
\emph{[Problem text]}

Think briefly and solve the multiple-choice question. On the final line, write
exactly one answer in the form \texttt{\textbackslash boxed\{X\}}, where $X$ is
one of A, B, C, D. Do not write anything after the boxed answer.
\end{promptbox}
For multiple-choice datasets, the extractor maps answer tags, boxed letters,
parenthesized letters, and trailing single-letter answers to the same option key.

\textbf{Verification Debate Prompt.}
\phantomsection
\label{app:prompt_verify_then_answer}
The verification prompt is appended after the agent's previous response. It
includes clipped peer responses and then asks:
\begin{promptbox}{Verification Debate Prompt}
\small
Other assistants answered the same question. Verify your previous answer
against the peer answers below. First write a short verification note that
identifies the single most important agreement or disagreement that affects
correctness. Then, on the final line only, give your final answer in the same
required format.
\end{promptbox}
This is the main debate prompt used in the experiments.

\textbf{Standard Debate Prompt.}
\phantomsection
\label{app:prompt_standard_debate}
The standard prompt-control ablation replaces the verification instruction with:
\begin{promptbox}{Standard Debate Prompt}
\small
Other assistants answered the same question. Compare their solutions with
yours. Revise your answer if needed. Return only your final solution in the same
required format.
\end{promptbox}

\textbf{Tie-Break Judge Prompt.}
\phantomsection
\label{app:prompt_tiebreak_judge}
The judge receives the final candidate responses only for tied final votes. In
the verification-style runs, it is asked:
\begin{promptbox}{Tie-Break Judge Prompt}
\small
Review the candidate solutions below and verify which answer is best
supported. First write a short verification note explaining the decisive
evidence. Then, on the final line only, give the final answer in the same
required format.
\end{promptbox}

\clearpage
\subsection{Controlled Readout Probe Details}
\label{app:controlled_readout_probes}
For the answer-stripped rationale intervention, let $T$, $F$, and $N$ denote
the correct-rationale, wrong-rationale, and neutral contexts. We measure the
gold answer against all wrong-answer probability mass,
\[
m_a(x)=\log p(y_g\mid x)-\log\sum_{y\ne y_g}p(y\mid x).
\]
The correct-over-wrong gap is $m_a(T)-m_a(F)$. We also report the
help-over-harm asymmetry
\[
A=[m_a(T)-m_a(N)]-[m_a(N)-m_a(F)],
\]
which tests whether the helpful shift from correct reasoning exceeds the
reverse harmful shift from wrong reasoning.

\begin{table}[H]
\centering
\caption{Original peer-response effects and equivalent conclusion-only support
over recovery cases. Trace effects are means with 95\% confidence intervals;
$\widehat W$ is the median interpolated equivalent support. The final column
reports the correct-support minus wrong-support answer margin under the Verify
prompt at each dose.}
\label{tab:lvd_equivalent_support_full}
\small
\resizebox{\textwidth}{!}{%
\begin{tabular}{llccc}
\toprule
\textbf{Backbone} & \textbf{Dataset}
& $\boldsymbol{\Delta m_{\mathrm{trace}}}$ \textbf{[95\% CI]}
& \textbf{Median $\boldsymbol{\widehat W}$}
& \textbf{Correct$-$wrong margin, $\boldsymbol{d=1/2/3}$} \\
\midrule
Qwen3-4B & GPQA       & 8.15 [4.96, 11.41]  & 2.23 & 2.59 / 5.94 / 13.63 \\
Qwen3-4B & MMLU-Redux & 11.40 [9.02, 13.97] & $>3$ & 1.03 / 5.36 / 11.76 \\
OLMo3-7B & GPQA       & 4.57 [3.65, 5.50]   & 1.92 & 3.33 / 5.06 / 5.79 \\
OLMo3-7B & MMLU-Redux & 5.47 [4.52, 6.43]   & $>3$ & 3.59 / 5.18 / 5.53 \\
\bottomrule
\end{tabular}%
}
\end{table}

\begin{table}[H]
\centering
\caption{Truth sensitivity of answer-stripped trace rationales. Entries are
mean margin changes with 95\% confidence intervals, shown as Standard / Verify.}
\label{tab:lvd_answer_stripped_full}
\small
\resizebox{\textwidth}{!}{%
\begin{tabular}{lcc}
\toprule
\textbf{Backbone / dataset} & \textbf{Help-over-harm asymmetry $\boldsymbol{A}$}
& \textbf{Correct-over-wrong gap $\boldsymbol{T-F}$} \\
\midrule
Qwen3-4B / GPQA
& 7.34 [2.14, 12.45] / 6.84 [1.29, 12.33]
& 1.90 [.54, 3.39] / 4.59 [2.45, 6.96] \\
Qwen3-4B / MMLU-Redux
& 8.70 [3.50, 13.72] / 6.36 [2.34, 10.42]
& 5.98 [3.78, 8.32] / 5.50 [3.46, 7.69] \\
OLMo3-7B / GPQA
& 1.92 [.46, 3.35] / 1.39 [.17, 2.58]
& 1.70 [1.16, 2.31] / 2.01 [1.35, 2.77] \\
OLMo3-7B / MMLU-Redux
& 2.84 [.87, 4.85] / 2.24 [.78, 3.78]
& 1.93 [1.26, 2.63] / 2.10 [1.41, 2.83] \\
\bottomrule
\end{tabular}%
}
\end{table}

\begin{table}[H]
\centering
\caption{Free-generation accuracy after three peer messages. Each entry
evaluates the same 50 questions. Correct support is bolded.}
\label{tab:lvd_free_generation_full}
\small
\resizebox{\textwidth}{!}{%
\begin{tabular}{lllrrrr}
\toprule
\textbf{Backbone} & \textbf{Dataset} & \textbf{Prompt}
& \textbf{Neutral} & \textbf{Correct support}
& \textbf{Answer withheld} & \textbf{Shuffled rationale} \\
\midrule
Qwen3-4B & GPQA       & Standard & .32 & \textbf{.50} & .32 & .28 \\
Qwen3-4B & GPQA       & Verify   & .32 & \textbf{.46} & .34 & .14 \\
Qwen3-4B & MMLU-Redux & Standard & .28 & \textbf{.58} & .26 & .28 \\
Qwen3-4B & MMLU-Redux & Verify   & .38 & \textbf{.42} & .32 & .26 \\
OLMo3-7B & GPQA       & Standard & .30 & \textbf{.80} & .30 & .30 \\
OLMo3-7B & GPQA       & Verify   & .30 & \textbf{.86} & .30 & .30 \\
OLMo3-7B & MMLU-Redux & Standard & .30 & \textbf{.80} & .30 & .28 \\
OLMo3-7B & MMLU-Redux & Verify   & .30 & \textbf{.88} & .30 & .20 \\
\midrule
\textbf{Average} & & & .313 & \textbf{.663} & .305 & .255 \\
\bottomrule
\end{tabular}%
}
\end{table}

\clearpage

\textbf{Prompt-Control Tables.}
\phantomsection
\label{app:prompt_control_tables}
Tables~\ref{tab:r5_prompt_style_dynamics} and
\ref{tab:r5_prompt_style_control} report the completed control that replaces
the verification debate prompt with the normal \texttt{standard} debate prompt.
The control uses the same five datasets, seed banks, verifier settings, debate
rounds, saved traces, and finalizer as the main five-round runs. The standard
prompt is slightly slower in the mean vote trajectory: it trails
\texttt{verify\_then\_answer} in R1--R3, reaches its mean vote peak one round
later, and ends essentially tied at R5. It preserves more proposal-hit coverage
after R2, but the finalizer does not convert that extra coverage into higher
final accuracy.

These tables provide a secondary prompt-style analysis for the controlled
readout probes in \secref{subsec:controlled_readout_probes}. In the averaged
dynamics, the standard prompt retains 2.37 points more R5 proposal-hit coverage
than the verification prompt, while their final accuracies differ by only 0.07
points overall. The aggregate comparison therefore does not identify a uniform
prompt advantage or directly estimate verification mass.

The effect is not uniform across datasets or selection methods. The standard
prompt helps Vanilla Debate on MATH500, OlympiadBench, and GPQA, while the
verification prompt is better for MMLU-Redux Vanilla Debate and for
AC-Greedy Debate on GPQA and MMLU-Redux. We therefore use this control as a
secondary configuration diagnostic rather than evidence that one prompt
dominates the other. The controlled fixed-proposal interventions in
\secref{subsec:controlled_readout_probes} provide the direct test of
answer-specific evidence use.

\begin{table}[t]
\centering
\caption{Round-level prompt-style dynamics averaged over the 15 matched
dataset-method cells. R0 is identical because both controls use the same seed
answers before debate. $\Delta$ is Standard minus Verify. All values are
percentages.}
\small
\begin{tabular}{lrrrrrr}
\toprule
Round & Verify Vote & Standard Vote & $\Delta$ Vote & Verify Hit & Standard Hit & $\Delta$ Hit \\
\midrule
R0 & 76.04 & 76.04 & +0.00 & 83.95 & 83.95 & +0.00 \\
R1 & 77.95 & 77.81 & -0.14 & 85.38 & 84.89 & -0.49 \\
R2 & 78.19 & 78.12 & -0.08 & 82.20 & 83.38 & +1.18 \\
R3 & 78.37 & 78.16 & -0.21 & 80.44 & 82.98 & +2.54 \\
R4 & 78.10 & 78.43 & +0.32 & 79.56 & 82.04 & +2.48 \\
R5 & 78.21 & 78.13 & -0.08 & 79.26 & 81.62 & +2.37 \\
\bottomrule
\end{tabular}
\label{tab:r5_prompt_style_dynamics}
\end{table}

\begin{table}[t]
\centering
\caption{Prompt-style control for the five-round new-seed debate runs. Verify
Final is the completed \texttt{verify\_then\_answer} debate prompt; Standard
Final repeats the same datasets, seed banks, verifier settings, and
\texttt{vote\_then\_judge\_tiebreak} finalizer with the normal
\texttt{standard} debate prompt. $\Delta$ is Standard Final minus Verify Final.
All accuracy values are percentages.}
\small
\resizebox{\textwidth}{!}{%
\begin{tabular}{llrrrrrr}
\toprule
Dataset & Method & N & Verify Final & Standard Final & $\Delta$ & Standard R5 Vote & Standard R5 Hit \\
\midrule
AMC12 & Vanilla Debate & 739 & 92.96 & 92.42 & -0.54 & 92.29 & 93.78 \\
 & AC-Greedy Debate & 739 & 91.61 & 91.34 & -0.27 & 91.75 & 94.18 \\
 & SC-Greedy Debate & 739 & 91.88 & 92.02 & +0.14 & 91.88 & 93.50 \\
\midrule
MATH500 & Vanilla Debate & 300 & 91.33 & 92.00 & +0.67 & 91.67 & 93.00 \\
 & AC-Greedy Debate & 300 & 91.67 & 92.67 & +1.00 & 93.33 & 94.00 \\
 & SC-Greedy Debate & 300 & 92.00 & 91.67 & -0.33 & 91.67 & 93.33 \\
\midrule
OlympiadBench & Vanilla Debate & 474 & 81.22 & 83.76 & +2.53 & 83.97 & 85.86 \\
 & AC-Greedy Debate & 474 & 81.43 & 81.86 & +0.42 & 81.65 & 84.39 \\
 & SC-Greedy Debate & 474 & 80.80 & 81.22 & +0.42 & 81.43 & 83.33 \\
\midrule
GPQA & Vanilla Debate & 346 & 51.73 & 52.31 & +0.58 & 51.45 & 60.98 \\
 & AC-Greedy Debate & 346 & 52.02 & 49.71 & -2.31 & 50.58 & 60.98 \\
 & SC-Greedy Debate & 346 & 50.58 & 50.58 & +0.00 & 50.29 & 58.96 \\
\midrule
MMLU-Redux & Vanilla Debate & 300 & 72.33 & 70.00 & -2.33 & 70.67 & 73.33 \\
 & AC-Greedy Debate & 300 & 75.67 & 74.33 & -1.33 & 74.33 & 76.00 \\
 & SC-Greedy Debate & 300 & 75.33 & 75.67 & +0.33 & 75.00 & 78.67 \\
\midrule
Average & Vanilla Debate & -- & 77.92 & 78.10 & +0.18 & 78.01 & 81.39 \\
Average & AC-Greedy Debate & -- & 78.48 & 77.98 & -0.50 & 78.33 & 81.91 \\
Average & SC-Greedy Debate & -- & 78.12 & 78.23 & +0.11 & 78.05 & 81.56 \\
Overall & All & -- & 78.17 & 78.10 & -0.07 & 78.13 & 81.62 \\
\bottomrule
\end{tabular}
}
\label{tab:r5_prompt_style_control}
\end{table}

\clearpage
\subsection{Dynamics and Conversion Controls}
\label{app:dynamics_conversion_controls}
\paragraph{Immediate-finalizer control.}
A shrinking proposal-hit rate does not distinguish correct minorities that
were recovered from those that were simply lost. We therefore apply the same
\texttt{vote\_then\_judge\_tiebreak} finalizer independently to the proposals
saved at R0 through R5. This changes only the stage at which the finalizer is
applied; no debate generation is rerun. The analysis covers four societies and
five benchmarks, with equal weight assigned to each benchmark.

\begin{table}[H]
\centering
\caption{Immediate-finalizer and interaction control. Panel (a) compares the
R0 and R5 states by method. The final column is R5 Vote+Judge minus R0
Vote+Judge; confidence intervals resample questions within each benchmark
before recomputing the unweighted macro average. Panel (b) applies the same
finalizer at every saved stage and averages over all four societies and five
benchmarks. All accuracy, hit, gain, tie, and unanimity values are percentages.}
\label{tab:immediate_finalizer_control}
\vspace{0.3em}
\textbf{(a) Method-specific R0 and R5 comparison.}\\[0.2em]
\scriptsize
\resizebox{\textwidth}{!}{%
\begin{tabular}{lrrrrrrrr}
\toprule
Method & Vote R0 $\to$ R5 & Hit R0 $\to$ R5 & R0 V+J & R0 judge & R5 V+J & Additional R5 gain [95\% CI] & Tie R0 $\to$ R5 & Unanimous R0 $\to$ R5 \\
\midrule
Vanilla Debate & 75.65 $\to$ 77.99 & 83.03 $\to$ 78.98 & 76.03 & $+0.39$ & 77.89 & $+1.85\ [0.54,3.18]$ & 7.27 $\to$ 0.83 & 70.50 $\to$ 95.86 \\
Top-4 Debate & 76.25 $\to$ 78.16 & 84.11 $\to$ 79.27 & 76.84 & $+0.59$ & 78.42 & $+1.58\ [0.37,2.79]$ & 8.90 $\to$ 0.78 & 67.78 $\to$ 95.84 \\
AC-Greedy Debate & 76.84 $\to$ 78.41 & 84.93 $\to$ 79.60 & 77.59 & $+0.75$ & 78.51 & $+0.92\ [-0.39,2.27]$ & 9.58 $\to$ 0.91 & 64.61 $\to$ 95.45 \\
SC-Greedy Debate & 75.64 $\to$ 78.22 & 83.88 $\to$ 79.19 & 75.33 & $-0.31$ & 78.08 & $+2.74\ [1.40,4.11]$ & 9.56 $\to$ 0.63 & 63.07 $\to$ 96.11 \\
\bottomrule
\end{tabular}%
}

\vspace{0.7em}
\textbf{(b) Full stage trajectory.}\\[0.2em]
\small
\begin{tabular}{lrrrrrrr}
\toprule
Stage & Vote & Hit & Headroom & Vote+Judge & Gain over R0 vote & Tie & Unanimous \\
\midrule
R0 & 76.09 & 83.99 & 7.90 & 76.45 & $+0.36$ & 8.83 & 66.49 \\
R1 & 78.09 & 85.32 & 7.23 & 77.99 & $+1.89$ & 6.74 & 72.13 \\
R2 & 78.32 & 82.20 & 3.88 & 78.34 & $+2.25$ & 3.64 & 86.14 \\
R3 & 78.35 & 80.56 & 2.21 & 78.52 & $+2.42$ & 1.96 & 91.71 \\
R4 & 78.10 & 79.58 & 1.48 & 78.12 & $+2.02$ & 1.45 & 94.30 \\
R5 & 78.20 & 79.26 & 1.06 & 78.22 & $+2.13$ & 0.79 & 95.81 \\
\bottomrule
\end{tabular}
\end{table}

At R0, the finalizer contributes only 0.36 points on average. The R2 and R3
states reach 78.34\% and 78.52\% after the same finalizer, respectively 1.89
and 2.07 points above the R0 Vote+Judge result. This rise occurs while ties
fall from 8.83\% to 3.64\% at R2 and 1.96\% at R3, and unanimity rises from
66.49\% to 86.14\% and 91.71\%. The trajectory therefore shows rapid
interaction gains followed by a plateau, not a claim that the final round is
always optimal. At the method level, the additional R5 gain beyond the R0
readout ranges from 0.92 to 2.74 points; its interval excludes zero for
Vanilla, Top-4, and SC-Greedy.

\paragraph{Backbone-dependent conversion.}
We next ask whether initial headroom predicts final lift on OLMo as it does on
Qwen. For the pooled comparison, we use the four datasets and four society
constructions available for both backbones. The MMLU analysis pools the five
predefined Table~\ref{tab:main_results} subjects, and its Pearson interval is a
paired question-bootstrap interval.

\begin{table}[H]
\centering
\caption{Association between initial headroom $H=P_0-V_0$ and final lift
$G=F-V_0$ on the two backbones. The MMLU row reports a 95\% paired
question-bootstrap interval for Pearson $r$.}
\label{tab:backbone_headroom_conversion}
\small
\begin{tabular}{lcc}
\toprule
Scope & Qwen3-4B & OLMo3-7B \\
\midrule
Four shared datasets, pooled & $r=.770,\ \rho=.787$ & $r=.548,\ \rho=.648$ \\
Five MMLU subjects, pooled & $r=.568\ [.125,.787]$ & $r=.635\ [.219,.805]$ \\
\bottomrule
\end{tabular}
\end{table}

The positive OLMo associations show that proposal supply remains informative,
especially on MMLU. The failure mode is nevertheless dataset-dependent. On
OLMo MATH500, all four societies retain 3.33--6.33 points of initial headroom,
yet every final lift is negative, ranging from $-1.67$ to $-0.33$ points. This
localizes the failure to conversion rather than missing proposals: useful
answers are surfaced, but the backbone and readout do not reliably turn them
into a better final decision.

\clearpage
\subsection{Secondary Construction and Transfer Diagnostics}
\label{app:secondary_construction_diagnostics}
\paragraph{Transfer decomposition.}
The transfer experiment in Table~\ref{tab:amc12_mmlu_transfer_r5} can be read
through the exact gain decomposition in
Proposition~\ref{prop:recovery_damage_decomposition}.
Recovered, New correct, and Damage below are percentage-point contributions and
satisfy $\text{Gain}=\text{Recovered}+\text{New correct}-\text{Damage}$.

\begin{table}[H]
\centering
\caption{AMC12-to-MMLU-Redux transfer decomposition. Recovery rate is
conditional on R0 headroom; the contribution columns are unconditional
percentage points.}
\label{tab:transfer_decomposition}
\small
\resizebox{\textwidth}{!}{%
\begin{tabular}{llrrrrrrrr}
\toprule
Backbone & Method & R0 vote & Headroom & Recovery rate & Recovered & New correct & Damage & Gain & Final \\
\midrule
Qwen3-4B & SC-Greedy & 72.33 & 11.67 & 45.71 & 5.33 & 0.67 & 2.00 & $+4.00$ & 76.33 \\
Qwen3-4B & AC-Greedy & 75.33 & 8.33 & 48.00 & 4.00 & 0.33 & 3.00 & $+1.33$ & 76.67 \\
\midrule
OLMo3-7B & SC-Greedy & 55.33 & 14.67 & 36.36 & 5.33 & 0.67 & 3.00 & $+3.00$ & 58.33 \\
OLMo3-7B & AC-Greedy & 54.00 & 15.33 & 26.09 & 4.00 & 0.33 & 2.00 & $+2.33$ & 56.33 \\
\bottomrule
\end{tabular}%
}
\end{table}

On Qwen, AC-Greedy begins with the higher vote and therefore retains the best
final accuracy. SC-Greedy has more headroom, recovers 5.33 rather than 4.00
points, and incurs less damage, explaining its larger lift. On OLMo, the two
methods have similar headroom, but SC-Greedy has the higher recovery rate and
again recovers 5.33 rather than 4.00 points. Transfer therefore depends on both
proposal supply and its subsequent conversion.

\paragraph{Complementary-coverage diagnostics.}
We use expertise in a behavioral, task-level sense: an agent is useful when it
solves construction examples that its teammates miss and this complementarity
improves coverage on unseen questions. Let $C_1$ and $C_2$ be the construction
questions correctly solved by the first two selected agents. Lower
$|C_1\cap C_2|/|C_1\cup C_2|$ means less redundant coverage.

\begin{table}[H]
\centering
\caption{Complementarity and held-out coverage on Qwen3-4B MATH500. Panel (a)
uses the 200-question construction split. Panel (b) freezes each four-agent
society and evaluates cumulative R0 proposal-hit coverage on 300 held-out
questions; agents are listed in selection order, and tied construction
accuracies share the minimum rank.}
\label{tab:complementary_coverage}
\vspace{0.3em}
\textbf{(a) Correct-solution overlap of the first two agents.}\\[0.2em]
\small
\begin{tabular}{lrrr}
\toprule
Method & Intersection & Union & IoU $\downarrow$ \\
\midrule
Top-4 Debate & 182 & 188 & 96.8\% \\
AC-Greedy & 177 & 190 & 93.2\% \\
SC-Greedy & 176 & 186 & 94.6\% \\
\bottomrule
\end{tabular}

\vspace{0.7em}
\textbf{(b) Team composition and held-out cumulative R0 coverage.}\\[0.2em]
\scriptsize
\resizebox{\textwidth}{!}{%
\begin{tabular}{llrr}
\toprule
Method & Construction rank (accuracy), in team order & Held-out cumulative coverage & R5 final \\
\midrule
Top-4 Debate & 1 (92.5\%), 1 (92.5\%), 3 (92.0\%), 3 (92.0\%) & 88.33 $\to$ 91.33 $\to$ 92.00 $\to$ 92.67 & 90.67 \\
AC-Greedy & 1 (92.5\%), 25 (91.0\%), 25 (91.0\%), 8 (91.5\%) & 90.00 $\to$ 92.67 $\to$ 93.33 $\to$ 94.33 & 91.67 \\
SC-Greedy & 8 (91.5\%), 181 (89.5\%), 8 (91.5\%), 25 (91.0\%) & 90.67 $\to$ 92.67 $\to$ 93.67 $\to$ 93.67 & 92.00 \\
\bottomrule
\end{tabular}%
}
\end{table}

Both coverage objectives select less-overlapping first pairs than Top-4. When
the four-agent societies are evaluated out of sample, AC-Greedy covers 283 of
300 questions, SC-Greedy covers 281, and Top-4 covers 278. The selected agent
identities need not encode fixed semantic roles; the evidence supports the
narrower claim of transferable, task-level complementary proposal behavior.

\paragraph{Pool- and construction-size sensitivity.}
We run 200 paired subsampling replications across four Qwen benchmarks, with
four selected agents and 50 held-out questions per replication. Entries in
Table~\ref{tab:construction_sensitivity} are equal-dataset macro held-out
proposal-hit@4.

\begin{table}[H]
\centering
\caption{Sensitivity to candidate-pool size $N$ and construction-set size $m$.
Each arrow follows the increasing values shown in the column header.}
\label{tab:construction_sensitivity}
\small
\resizebox{\textwidth}{!}{%
\begin{tabular}{lcc}
\toprule
Selector & Pool $N=50\to100\to200\to500$ & Construction $m=25\to50\to100\to150$ \\
\midrule
AC-Greedy & $84.06\to84.08\to84.29\to84.12$ & $83.22\to83.59\to84.12\to84.38$ \\
SC-Greedy & $83.95\to84.03\to84.01\to83.81$ & $83.30\to83.36\to83.81\to83.85$ \\
\bottomrule
\end{tabular}%
}
\end{table}

Coverage is effectively saturated over $N=50$--$500$, while increasing $m$
from 25 to 150 improves AC-Greedy by 1.16 points (95\% CI $[0.87,1.43]$) and
SC-Greedy by 0.56 points ($[0.26,0.86]$). The useful perturbation scale remains
task- and backbone-dependent. We swept a broader range on construction data,
retained $\{5\times10^{-4},10^{-3},2\times10^{-3}\}$ because it contained a
useful concentration of high-quality seeds, and froze that band before test
evaluation; the per-scale profiles are in
Table~\ref{tab:seed_profiling_details}.

\paragraph{Hidden-Prefix construction.}
To test whether construction requires labels or discrete answer identities, we
remove final-answer tokens, keep the first 64 reasoning tokens, and record the
$\ell_2$-normalized final-layer state at the prefix endpoint. For each
question, Hidden-Prefix clusters the 500-agent pool with $k$-means ($k=4$, 20
restarts) in cosine geometry and selects four agents to maximize
inverse-support representation-cluster coverage. Neither answer identity nor
correctness enters selection.

Selection uses 200 construction questions and evaluation uses a disjoint
300-question MMLU-Redux test set. We fix one Hidden-Prefix society and one
TopAcc society and evaluate each with 10 paired generation seeds, temperature
$0.7$, top-$p=0.95$, four agents, five verification-prompt rounds, an
800-character peer context, and the same vote-then-judge finalizer.

\begin{table}[H]
\centering
\caption{Fixed-society Qwen3-4B MMLU-Redux results. Hidden-Prefix and TopAcc
report mean $\pm$ SD across 10 paired generation seeds; AC-Greedy and
SC-Greedy are the submitted fixed runs and are included only as reference
points.}
\label{tab:hidden_prefix_construction}
\small
\resizebox{\textwidth}{!}{%
\begin{tabular}{llcrrr}
\toprule
Selector & Construction signal & Requires & R0 hit@4 & R0 vote & R5 final \\
\midrule
AC-Greedy & Correct-answer coverage & Labels & 84.67 & 74.67 & 75.67 \\
SC-Greedy & Answer-cluster coverage & Clusters & 82.00 & 68.67 & 75.33 \\
Hidden-Prefix & Representation-cluster coverage & Neither & $84.30\pm0.69$ & $75.07\pm1.03$ & $77.00\pm1.25$ \\
TopAcc & Individual accuracy & Labels & $83.60\pm0.62$ & $73.93\pm1.07$ & $75.40\pm0.80$ \\
\bottomrule
\end{tabular}%
}
\end{table}

Hidden-Prefix reaches 84.30 hit@4 without labels or answer identities, within
0.37 points of supervised AC-Greedy. Against its paired TopAcc control, it
improves R0 vote by 1.13 points (95\% CI $[0.20,2.07]$) and R5 final accuracy
by 1.60 points ($[0.32,2.88]$); a two-way seed-and-question interval for the
final gain is $[0.07,3.10]$. This establishes an answer-string-free
construction route on a multiple-choice task. It does not yet validate
semantic equivalence classes or evaluation for open-ended generation.

\clearpage
\subsection{Complete Round-Level Statistics}
\label{app:round_level_statistics}

\begin{figure}[t]
\centering
\includegraphics[width=\textwidth]{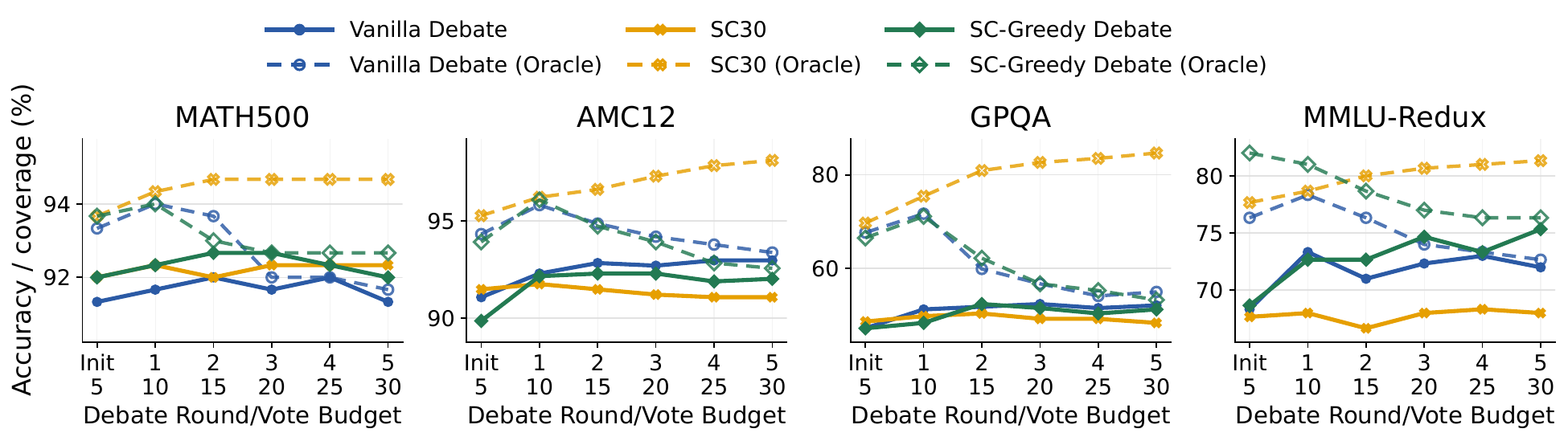}
\caption{Label-free five-round dynamics. This appendix counterpart to
\figref{fig:r5_debate_dynamics_label_based} shows methods that do not use
labeled training diagnostics for society selection. Solid lines show
round-level majority-vote accuracy before finalization; dashed hollow legend
entries marked ``Oracle'' show proposal-hit coverage.}
\label{fig:r5_debate_dynamics_label_free}
\end{figure}

This appendix collects the full round-level tables behind
\figref{fig:r5_debate_dynamics_label_based} and
\figref{fig:r5_debate_dynamics_label_free}. The tables expand the mechanism
diagnostics used in \secref{sec:experiments} and \secref{sec:supply}. They ask two
questions: whether the selected agents supply correct proposals that vote
misses, and whether the matched readout converts that supply into final
accuracy without excessive damage.

\paragraph{Metrics and Reading Guide.}
Vote is the majority-vote accuracy on the proposal multiset at a given round.
Proposal Hit is the probability that at least one visible proposal at that
round contains the correct answer. Headroom $H$ is the gap between these two
events: it counts cases where the proposal multiset contains the correct answer
but majority vote misses it. Final is the end-of-run
\texttt{vote\_then\_judge\_tiebreak} accuracy. Gain $G$ is Final minus Vote.
Recovery $R$ is the final accuracy on headroom instances. New Correctness $I$
is the final accuracy on instances where the visible proposal multiset did not
contain the correct answer and Vote was wrong. Damage $L$ is the failure rate
on instances where Vote is already correct. Thus Proposal Hit and Headroom
diagnose proposal supply, while Recovery, New Correctness, and Damage diagnose
readout conversion.

These definitions make the appendix a supply-readout analysis rather than a
second main-results table. Higher Proposal Hit or Headroom means that the
selected agents surface more correct proposals. Higher Recovery means that the
readout converts more surfaced correct minorities into final answers. Lower
Damage means that the readout preserves cases already solved by majority vote.
New Correctness tracks gains not explained by the proposal-hit supply at the
chosen round. Final accuracy combines these effects, so it can hide whether a
gain comes from better proposal supply, better readout conversion, or lower
damage.

The main pattern is that useful proposal supply appears early and then contracts
as debate moves toward consensus. At R0, all four debate societies have
7--8 points of recoverable headroom on average. By R5, the remaining headroom
falls to about one point. This supports the fixed-initial-proposal view in
\secref{sec:headroom_lvd}: the society often surfaces correct minorities before
interaction, and later rounds can reduce the set of alternatives even when vote
accuracy improves. Debate gains therefore depend on recovering useful minority
answers before proposal-hit coverage collapses.

The recovery and damage columns show why higher proposal-hit supply is not
enough by itself. For example, AC-Greedy Debate has the highest average R0
proposal-hit rate, 84.93\%, but its final lift over the R0 vote is 1.64 points.
SC-Greedy Debate has lower average R0 proposal-hit rate, 83.88\%, but larger
average lift, 2.48 points, because its starting vote is lower and more of its
gain comes from converting missed proposals. These differences are why the main
text reports headroom, recovery, and damage rather than final accuracy alone.

\subsubsection{Average Recovery Diagnostics}

\begin{table}[t]
\centering
\caption{Round-level recovery diagnostics averaged over the five benchmarks.
For each round, Vote and Hit are the round-specific majority-vote and
proposal-hit accuracies, while Final is the end-of-run
\texttt{vote\_then\_judge\_tiebreak} accuracy. We define $H=P-V$, $G=D-V$,
$\rho=G/H$, $R=\Pr(D=1\mid P=1,V=0)$,
$I=\Pr(D=1\mid P=0,V=0)$, and $L=\Pr(D=0\mid V=1)$, where $D$ is the final
decision. All values except $\rho$ are percentages; averages are unweighted
over benchmarks.}
\scriptsize
\resizebox{\textwidth}{!}{%
\begin{tabular}{llrrrrrrrrr}
\toprule
Method & Round & Vote & Hit & Final & $H$ & $G$ & $\rho$ & $R$ & $I$ & $L$ \\
\midrule
Vanilla Debate & R0 & 75.65 & 83.03 & 77.92 & 7.39 & 2.27 & 0.31 & 53.49 & 9.94 & 5.00 \\
 & R1 & 77.85 & 85.05 & 77.92 & 7.20 & 0.07 & 0.01 & 43.10 & 1.23 & 4.25 \\
 & R2 & 77.84 & 81.86 & 77.92 & 4.02 & 0.08 & 0.02 & 24.33 & 0.14 & 1.58 \\
 & R3 & 78.05 & 80.07 & 77.92 & 2.03 & -0.13 & -0.06 & 43.39 & 0.60 & 1.37 \\
 & R4 & 78.13 & 79.04 & 77.92 & 0.92 & -0.21 & -0.23 & 38.89 & 0.13 & 1.08 \\
 & R5 & 77.99 & 78.98 & 77.92 & 0.99 & -0.08 & -0.08 & 10.00 & 0.13 & 0.33 \\
\midrule
Top-4 Debate & R0 & 76.25 & 84.11 & 78.28 & 7.87 & 2.03 & 0.26 & 40.88 & 7.26 & 4.53 \\
 & R1 & 78.51 & 85.15 & 78.28 & 6.64 & -0.23 & -0.03 & 29.46 & 0.38 & 3.67 \\
 & R2 & 78.72 & 82.22 & 78.28 & 3.51 & -0.44 & -0.13 & 28.94 & 0.48 & 2.16 \\
 & R3 & 78.30 & 80.93 & 78.28 & 2.63 & -0.02 & -0.01 & 30.09 & 0.00 & 1.18 \\
 & R4 & 78.11 & 79.66 & 78.28 & 1.56 & 0.17 & 0.11 & 42.93 & 0.00 & 0.73 \\
 & R5 & 78.16 & 79.27 & 78.28 & 1.11 & 0.11 & 0.10 & 17.78 & 0.00 & 0.25 \\
\midrule
AC-Greedy Debate & R0 & 76.84 & 84.93 & 78.48 & 8.09 & 1.64 & 0.20 & 48.94 & 7.70 & 6.01 \\
 & R1 & 78.63 & 85.72 & 78.48 & 7.09 & -0.15 & -0.02 & 34.76 & 1.68 & 4.23 \\
 & R2 & 78.43 & 82.49 & 78.48 & 4.06 & 0.06 & 0.01 & 45.25 & 0.40 & 2.46 \\
 & R3 & 78.73 & 80.84 & 78.48 & 2.10 & -0.25 & -0.12 & 37.24 & 0.00 & 1.58 \\
 & R4 & 78.50 & 79.97 & 78.48 & 1.47 & -0.02 & -0.01 & 39.44 & 0.00 & 0.64 \\
 & R5 & 78.41 & 79.60 & 78.48 & 1.19 & 0.07 & 0.06 & 32.00 & 0.00 & 0.11 \\
\midrule
SC-Greedy Debate & R0 & 75.64 & 83.88 & 78.12 & 8.23 & 2.48 & 0.30 & 48.52 & 7.20 & 5.34 \\
 & R1 & 77.37 & 85.35 & 78.12 & 7.98 & 0.75 & 0.09 & 34.76 & 1.45 & 3.55 \\
 & R2 & 78.32 & 82.25 & 78.12 & 3.93 & -0.20 & -0.05 & 25.29 & 0.00 & 2.58 \\
 & R3 & 78.33 & 80.42 & 78.12 & 2.09 & -0.21 & -0.10 & 29.56 & 0.00 & 1.37 \\
 & R4 & 77.69 & 79.65 & 78.12 & 1.97 & 0.43 & 0.22 & 24.71 & 0.00 & 0.42 \\
 & R5 & 78.22 & 79.19 & 78.12 & 0.97 & -0.10 & -0.10 & 6.67 & 0.00 & 0.26 \\
\bottomrule
\end{tabular}
}
\label{tab:round_recovery_diagnostics}
\end{table}

\paragraph{Interpretation.}
\tabref{tab:round_recovery_diagnostics} separates proposal supply from readout
conversion. Initial headroom is substantial for every method, which means the
selected societies surface correct answers that majority vote does not extract.
However, recovery falls sharply after the first few rounds because debate
compresses the proposal set. Damage also falls over rounds, since fewer
vote-correct cases are changed once the agents converge. New Correctness is
largest at R0 and becomes small after the debate trace has mostly converged,
which means most final gains are explained by surfaced proposal supply rather
than late invention of new correct answers. The tradeoff is
therefore not monotone: later rounds can make the majority more stable, but
they can also remove the minority alternatives that the readout needs to
recover.

\subsubsection{Five-Round New-Seed Debate Results}

\begin{table}[t]
\centering
\caption{Five-round debate results for the new seed teams. Final is accuracy
after \texttt{vote\_then\_judge\_tiebreak}; Lift is Final minus Init Vote.
Bold marks the best final accuracy within each dataset. Average rows report the
unweighted mean across the five benchmarks.}
\small
\resizebox{\textwidth}{!}{%
\begin{tabular}{llrrrrrrrr}
\toprule
Dataset & Method & N & Init Vote & R5 Vote & Final & Init Hit & R5 Hit & Lift & Judge Ties \\
\midrule
MATH500 & Vanilla Debate & 300 & 91.33 & 91.33 & 91.33 & 93.33 & 91.67 & +0.00 & 1 \\
MATH500 & AC-Greedy Debate & 300 & 91.33 & 91.33 & 91.67 & 94.33 & 91.67 & +0.33 & 2 \\
MATH500 & SC-Greedy Debate & 300 & 92.00 & 92.00 & \textbf{92.00} & 93.67 & 92.67 & +0.00 & 0 \\
OlympiadBench & Vanilla Debate & 474 & 80.38 & 81.65 & 81.22 & 83.54 & 82.28 & +0.84 & 7 \\
OlympiadBench & AC-Greedy Debate & 474 & 79.11 & 81.43 & \textbf{81.43} & 82.91 & 81.86 & +2.32 & 7 \\
OlympiadBench & SC-Greedy Debate & 474 & 80.59 & 80.59 & 80.80 & 83.33 & 81.22 & +0.21 & 5 \\
AMC12 & Vanilla Debate & 739 & 91.07 & 92.96 & \textbf{92.96} & 94.32 & 93.37 & +1.89 & 4 \\
AMC12 & AC-Greedy Debate & 739 & 90.53 & 91.88 & 91.61 & 95.13 & 92.83 & +1.08 & 9 \\
AMC12 & SC-Greedy Debate & 739 & 89.85 & 92.02 & 91.88 & 93.91 & 92.56 & +2.03 & 2 \\
GPQA & Vanilla Debate & 346 & 47.11 & 52.02 & 51.73 & 67.63 & 54.91 & +4.62 & 4 \\
GPQA & AC-Greedy Debate & 346 & 48.55 & 51.73 & \textbf{52.02} & 67.63 & 54.62 & +3.47 & 3 \\
GPQA & SC-Greedy Debate & 346 & 47.11 & 51.16 & 50.58 & 66.47 & 53.18 & +3.47 & 4 \\
MMLU-Redux & Vanilla Debate & 300 & 68.33 & 72.00 & 72.33 & 76.33 & 72.67 & +4.00 & 2 \\
MMLU-Redux & AC-Greedy Debate & 300 & 74.67 & 75.67 & \textbf{75.67} & 84.67 & 77.00 & +1.00 & 1 \\
MMLU-Redux & SC-Greedy Debate & 300 & 68.67 & 75.33 & 75.33 & 82.00 & 76.33 & +6.67 & 2 \\
\midrule
Average & Vanilla Debate & -- & 75.65 & 77.99 & 77.92 & 83.03 & 78.98 & +2.27 & 3.6 \\
Average & AC-Greedy Debate & -- & 76.84 & 78.41 & \textbf{78.48} & 84.93 & 79.60 & +1.64 & 4.4 \\
Average & SC-Greedy Debate & -- & 75.64 & 78.22 & 78.12 & 83.88 & 79.19 & +2.48 & 2.6 \\
\bottomrule
\end{tabular}
}
\label{tab:r5_new_seed_debate}
\end{table}

\paragraph{Interpretation.}
\tabref{tab:r5_new_seed_debate} shows that the same mechanism appears at the
dataset level. On GPQA and MMLU-Redux, initial proposal-hit rates are much
higher than initial vote accuracy, and debate produces the largest lifts. On
MATH500 and AMC12, the initial vote is already high, so the available room for
final improvement is smaller even when proposal-hit coverage is high. This is
consistent with \propref{prop:recovery_damage_decomposition}: final gain is
largest when there is recoverable headroom and the readout converts it without
large damage.

The construction methods also behave differently across datasets. AC-Greedy
Debate gives the best average final accuracy, but SC-Greedy Debate gives the
largest average lift from its own initial vote. This supports the role of
label-free coverage as a practical supply constructor, while also showing why
the supervised construction remains a useful reference.

\section{Proofs for \secref{sec:headroom_lvd}}
\label{app:proofs_headroom_lvd}

\subsection{Proof of \propref{prop:recovery_damage_decomposition}}
\label{app:proof_recovery_damage_decomposition}

\begin{proof}[Proof of \propref{prop:recovery_damage_decomposition}]
By definition,
\[
    \mathcal G_\eta(S)=\mathbb E[D_\eta-V]
    =
    \Pr(D_\eta=1,V=0)-\Pr(D_\eta=0,V=1).
\]
We decompose the positive term by whether the initial proposal supply hit the
correct answer:
\[
\Pr(D_\eta=1,V=0)
=
\Pr(D_\eta=1,P=1,V=0)
+
\Pr(D_\eta=1,P=0,V=0).
\]
Using the definitions of $R_\eta$ and $I_\eta$ gives
\[
\Pr(D_\eta=1,P=1,V=0)
=
\Pr(P=1,V=0)\Pr(D_\eta=1\mid P=1,V=0)
=
\mathcal H(S)R_\eta(S),
\]
and
\[
\Pr(D_\eta=1,P=0,V=0)
=
\Pr(P=0,V=0)I_\eta(S).
\]
The negative term is
\[
\Pr(D_\eta=0,V=1)
=
\Pr(V=1)\Pr(D_\eta=0\mid V=1)
=
\Pr(V=1)L_\eta(S).
\]
Substituting these three identities proves
\eqnref{eq:headroom_decomposition}.
\end{proof}

\subsection{Proof of \lemref{lem:verification_weighted_conjugacy}}
\label{app:proof_verification_weighted_conjugacy}

\begin{lemma}[Verification-Weighted Dirichlet Conjugacy]
\label{lem:verification_weighted_conjugacy}
Let $\pi=(\pi^{(k)})_{k\in\mathcal Y_t}$ be an answer-probability vector over
$\mathcal Y_t$, and let
$\pi\mid x\sim\operatorname{Dirichlet}(\alpha_0)$ with
$\alpha_0^{(k)}=\kappa q_\theta^0(k\mid x)$. Conditional on an effective
verified pseudo-count vector $W_{\eta,t}$, use the verification-weighted
pseudo-likelihood
\[
    p(\mathcal F_t,W_{\eta,t}\mid \pi)
    \propto
    \prod_{k\in\mathcal Y_t}
    \left(\pi^{(k)}\right)^{C_t^{(k)}+W_{\eta,t}^{(k)}}.
\]
Then the posterior remains Dirichlet, and its posterior predictive is
\eqnref{eq:lvd_posterior}.
\end{lemma}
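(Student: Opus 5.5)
The plan is a direct Bayes-rule computation followed by a standard Dirichlet moment identity. Since $W_{\eta,t}$ is treated as fixed (conditioned on), the pseudo-likelihood is a product of powers of the components of $\pi$ with nonnegative real exponents $C_t^{(k)}+W_{\eta,t}^{(k)}$. There is no multinomial coefficient to track because all $\pi$-independent factors are absorbed into the proportionality constant. I will write the Dirichlet prior density on the simplex $\Delta_{|\mathcal Y_t|-1}$ as
\[
p(\pi\mid x)=\frac{\Gamma(\kappa)}{\prod_{k}\Gamma(\alpha_0^{(k)})}\prod_{k\in\mathcal Y_t}\bigl(\pi^{(k)}\bigr)^{\alpha_0^{(k)}-1}.
\]
This uses $\sum_k\alpha_0^{(k)}=\kappa\sum_k q_\theta^0(k\mid x)=\kappa$, since $q_\theta^0$ is a distribution over $\mathcal Y_t$. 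I will also note that the density is well defined only when every $\alpha_0^{(k)}>0$, i.e.\ when $q_\theta^0$ has full support on $\mathcal Y_t$. I will state this as a standing assumption, or remark that the degenerate case can be handled by restricting to the support.

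The first step is to multiply prior and pseudo-likelihood. This gives the unnormalized posterior
\[
p(\pi\mid x,\mathcal F_t,W_{\eta,t})\propto\prod_k\bigl(\pi^{(k)}\bigr)^{\alpha_0^{(k)}+C_t^{(k)}+W_{\eta,t}^{(k)}-1}.
\]
I then recognize this as the kernel of $\operatorname{Dirichlet}(\alpha_{\mathrm{LVD},\eta})$ with $\alpha_{\mathrm{LVD},\eta}^{(k)}=\kappa q_\theta^0(k\mid x)+C_t^{(k)}+W_{\eta,t}^{(k)}$, matching the first line of \eqnref{eq:lvd_posterior}. The kernel is integrable because every exponent exceeds $-1$, given $\alpha_0^{(k)}>0$, $C_t^{(k)}\ge 0$ and $W_{\eta,t}^{(k)}\ge0$. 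Its normalizing constant is therefore the multivariate Beta function, and the posterior is exactly Dirichlet. This is the conjugacy claim.

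The second step is the predictive. I define $p(y=k\mid x,\mathcal F_t,W_{\eta,t})=\mathbb E[\pi^{(k)}\mid x,\mathcal F_t,W_{\eta,t}]$ and use the Dirichlet mean $\alpha^{(k)}/\sum_j\alpha^{(j)}$. The denominator sums to $\kappa+C_t^++W_{\eta,t}^+$, which recovers the second line of \eqnref{eq:lvd_posterior}. For completeness, I can derive the mean from the Beta-function identity $\int\pi^{(k)}\operatorname{Dir}(\pi;\alpha)\,d\pi=B(\alpha+e_k)/B(\alpha)$ together with $\Gamma(z+1)=z\Gamma(z)$. The special case $W_{\eta,t}=0$ gives \eqnref{eq:prop_alpha}, which serves as a sanity check.

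There is no real obstacle; the only delicate point is conceptual rather than technical. The pseudo-likelihood has non-integer exponents, so it is not a genuine multinomial likelihood of observed data. I will therefore stress that the argument uses only its functional form in $\pi$, which is all Bayes' rule requires up to proportionality. Together with the positivity condition on $\alpha_0$, this is the one hypothesis I need to state carefully.
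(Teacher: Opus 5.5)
Your proposal is correct and matches the paper's proof: multiply the Dirichlet kernel by the pseudo-likelihood, read off the parameters $\kappa q_\theta^0(k\mid x)+C_t^{(k)}+W_{\eta,t}^{(k)}$, and take the posterior mean, using $\sum_k q_\theta^0(k\mid x)=1$ for the denominator. Your full-support assumption $\alpha_0^{(k)}>0$ is a worthwhile addition that the paper leaves implicit, but keep it as a standing assumption rather than falling back on ``restrict to the support.'' Restricting does not recover \eqnref{eq:lvd_posterior}: for a zero-prior answer with $C_t^{(k)}+W_{\eta,t}^{(k)}>0$, the pseudo-likelihood vanishes on the prior's support, so Bayes' rule breaks down instead of producing that formula.
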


\begin{proof}[Proof of \lemref{lem:verification_weighted_conjugacy}]
The Dirichlet prior has density
\[
    p(\pi\mid x)
    \propto
    \prod_{k\in\mathcal Y_t}
    \left(\pi^{(k)}\right)^{\alpha_0^{(k)}-1}.
\]
Multiplying by the verification-weighted pseudo-likelihood gives
\[
\begin{aligned}
    p(\pi\mid x,\mathcal F_t,W_{\eta,t})
    &\propto
    p(\pi\mid x)\,
    p(\mathcal F_t,W_{\eta,t}\mid \pi)\\
    &\propto
    \prod_{k\in\mathcal Y_t}
    \left(\pi^{(k)}\right)^{
    \alpha_0^{(k)}-1+C_t^{(k)}+W_{\eta,t}^{(k)}
    }.
\end{aligned}
\]
This is the kernel of a Dirichlet distribution with parameters
\[
    \alpha_0^{(k)}+C_t^{(k)}+W_{\eta,t}^{(k)}
    =
    \kappa q_\theta^0(k\mid x)+C_t^{(k)}+W_{\eta,t}^{(k)}.
\]
The posterior predictive for a categorical draw from a Dirichlet posterior is
the posterior mean:
\[
    \mathbb E[\pi^{(k)}\mid x,\mathcal F_t,W_{\eta,t}]
    =
    \frac{
    \kappa q_\theta^0(k\mid x)+C_t^{(k)}+W_{\eta,t}^{(k)}
    }{
    \sum_{j\in\mathcal Y_t}
    \left(\kappa q_\theta^0(j\mid x)+C_t^{(j)}+W_{\eta,t}^{(j)}\right)
    }.
\]
Since $\sum_j q_\theta^0(j\mid x)=1$,
$\sum_j C_t^{(j)}=C_t^+$, and
$\sum_j W_{\eta,t}^{(j)}=W_{\eta,t}^+$, this is exactly
\eqnref{eq:lvd_posterior}.
\end{proof}

\subsection{Proof of \lemref{lem:lvd_marginalization}}
\label{app:proof_lvd_marginalization}

\begin{lemma}[LVD Marginalization]
\label{lem:lvd_marginalization}
Let
\begin{equation}
    q_\eta^v(z\mid x,\mathcal F_t)
    =
    \operatorname{softmax}_{z\in\mathcal Y_t}
    \bigl(a_\eta(z;x,\mathcal F_t)\bigr)
    \label{eq:verification_direction}
\end{equation}
be a soft allocation of verification support. If a latent verification draw
$v=z$ adds mass $\tau_{\eta,t}\ge 0$ to answer $z$, define
\begin{equation}
    p_\eta(y=k\mid v=z,x,\mathcal F_t)
    =
    \frac{
    \kappa q_\theta^0(k\mid x)+C_t^{(k)}
    +\tau_{\eta,t}\mathbf 1\{z=k\}
    }{
    \kappa+C_t^+ +\tau_{\eta,t}
    }.
    \label{eq:lvd_conditional}
\end{equation}
If $v\sim q_\eta^v(\cdot\mid x,\mathcal F_t)$, marginalizing $v$ gives
\eqnref{eq:lvd_posterior} with
$W_{\eta,t}^{(k)}=\tau_{\eta,t}q_\eta^v(k\mid x,\mathcal F_t)$.
\end{lemma}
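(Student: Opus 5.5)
The plan is a direct application of the law of total probability over the latent verification draw $v$, followed by an algebraic identification with \eqnref{eq:lvd_posterior}. Write
\[
p_\eta(y=k\mid x,\mathcal F_t)=\sum_{z\in\mathcal Y_t} q_\eta^v(z\mid x,\mathcal F_t)\,p_\eta(y=k\mid v=z,x,\mathcal F_t),
\]
which is valid because $\mathcal Y_t$ is finite and $q_\eta^v$ is a probability vector over it. The softmax form in \eqnref{eq:verification_direction} guarantees $q_\eta^v(z)\ge 0$ and $\sum_z q_\eta^v(z)=1$.

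The key observation is that the denominator $\kappa+C_t^+ +\tau_{\eta,t}$ in \eqnref{eq:lvd_conditional} does not depend on $z$. This holds because each latent draw adds the same total mass $\tau_{\eta,t}$, only to a different answer. The denominator can therefore be pulled out of the sum.

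The numerator then splits into two parts:
\begin{itemize}
\item The part $\kappa q_\theta^0(k\mid x)+C_t^{(k)}$ is independent of $z$, so summing it against $q_\eta^v$ returns it unchanged.
\item The part $\tau_{\eta,t}\mathbf 1\{z=k\}$ sums to $\tau_{\eta,t}q_\eta^v(k\mid x,\mathcal F_t)$.
\end{itemize}
Setting $W_{\eta,t}^{(k)}:=\tau_{\eta,t}q_\eta^v(k\mid x,\mathcal F_t)$ therefore gives the numerator of \eqnref{eq:lvd_posterior}.

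It remains to match the denominator. Since $\sum_k q_\eta^v(k)=1$, we get $W_{\eta,t}^+=\tau_{\eta,t}$, so $\kappa+C_t^+ +\tau_{\eta,t}=\kappa+C_t^+ +W_{\eta,t}^+$, exactly as in \eqnref{eq:lvd_posterior}. Nonnegativity $W_{\eta,t}^{(k)}\ge 0$ follows from $\tau_{\eta,t}\ge 0$, so this $W$ is an admissible pseudo-count vector. It may also be worth remarking that, via \lemref{lem:verification_weighted_conjugacy}, this predictive coincides with the Dirichlet posterior mean under pseudo-counts $C_t+W_{\eta,t}$.

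There is no real obstacle. The only point requiring care is the $z$-independence of the normalizer, which makes the mixture of predictives collapse into a single predictive with averaged pseudo-counts. If the added mass depended on $z$, say $\tau_{\eta,t}(z)$, the identity would fail in general, so I would state explicitly that $\tau_{\eta,t}$ is constant in $z$.
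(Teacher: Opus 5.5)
Your proposal is correct and follows the paper's proof essentially step for step. You marginalize over $v$ by total probability, pull out the $z$-independent normalizer $\kappa+C_t^+ +\tau_{\eta,t}$, and collapse $\sum_z q_\eta^v(z\mid x,\mathcal F_t)\mathbf 1\{z=k\}$ to $q_\eta^v(k\mid x,\mathcal F_t)$; you then use $W_{\eta,t}^+=\tau_{\eta,t}$ to match the denominator of the LVD posterior, exactly as the paper does.
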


\begin{proof}[Proof of \lemref{lem:lvd_marginalization}]
For fixed $v=z$, the conditional readout is
\[
    p_\eta(y=k\mid v=z,x,\mathcal F_t)
    =
    \frac{
    \kappa q_\theta^0(k\mid x)+C_t^{(k)}
    +\tau_{\eta,t}\mathbf 1\{z=k\}
    }{
    \kappa+C_t^+ +\tau_{\eta,t}
    }.
\]
Marginalizing over $v\sim q_\eta^v(\cdot\mid x,\mathcal F_t)$ gives
\[
\begin{aligned}
    p_\eta(y=k\mid x,\mathcal F_t)
    &=
    \sum_{z\in\mathcal Y_t}
    q_\eta^v(z\mid x,\mathcal F_t)
    p_\eta(y=k\mid v=z,x,\mathcal F_t)\\
    &=
    \frac{
    \kappa q_\theta^0(k\mid x)+C_t^{(k)}
    +\tau_{\eta,t}
    \sum_{z\in\mathcal Y_t}
    q_\eta^v(z\mid x,\mathcal F_t)\mathbf 1\{z=k\}
    }{
    \kappa+C_t^+ +\tau_{\eta,t}
    }\\
    &=
    \frac{
    \kappa q_\theta^0(k\mid x)+C_t^{(k)}
    +\tau_{\eta,t}q_\eta^v(k\mid x,\mathcal F_t)
    }{
    \kappa+C_t^+ +\tau_{\eta,t}
    }.
\end{aligned}
\]
Thus the marginal readout has the LVD form with
\[
    W_{\eta,t}^{(k)}=\tau_{\eta,t}q_\eta^v(k\mid x,\mathcal F_t),
    \qquad
    W_{\eta,t}^+=\tau_{\eta,t}.
\]
\end{proof}

\subsection{Proof of \propref{prop:verification_directed_drift}}
\label{app:proof_verification_directed_drift}

\begin{proof}[Proof of \propref{prop:verification_directed_drift}]
Let
$Q_t^{\mathrm{prop}}=\alpha_t^\star/A_t$ and
$\widetilde Q_{\eta,t}^{\mathrm{LVD}}=
(\alpha_t^\star+W_{\eta,t}^{(y^\star)})/(A_t+W_{\eta,t}^+)$.
We first compute the difference between the LVD correctness mass and the
proposal-only correctness mass:
\[
\begin{aligned}
    \widetilde Q_{\eta,t}^{\mathrm{LVD}}-Q_t^{\mathrm{prop}}
    &=
    \frac{\alpha_t^\star+W_{\eta,t}^{(y^\star)}}{A_t+W_{\eta,t}^+}
    -
    \frac{\alpha_t^\star}{A_t}\\
    &=
    \frac{
    A_t(\alpha_t^\star+W_{\eta,t}^{(y^\star)})
    -
    \alpha_t^\star(A_t+W_{\eta,t}^+)
    }{
    A_t(A_t+W_{\eta,t}^+)
    }\\
    &=
    \frac{
    A_tW_{\eta,t}^{(y^\star)}
    -
    \alpha_t^\star W_{\eta,t}^+
    }{
    A_t(A_t+W_{\eta,t}^+)
    }.
\end{aligned}
\]
If $W_{\eta,t}$ is fixed conditional on $\mathcal F_t$ and
$W_{\eta,t}^+>0$, then the denominator is positive, and positive drift is
equivalent to
\[
    A_tW_{\eta,t}^{(y^\star)}
    >
    \alpha_t^\star W_{\eta,t}^+.
\]
Dividing by $A_tW_{\eta,t}^+$ gives
\[
    \frac{W_{\eta,t}^{(y^\star)}}{W_{\eta,t}^+}
    >
    \frac{\alpha_t^\star}{A_t}
    =
    Q_t^{\mathrm{prop}},
\]
which proves the fixed-vector condition.
If $W_{\eta,t}$ is stochastic, the corresponding expected-drift condition is
\[
    \mathbb E\!\left[
    \frac{
    A_tW_{\eta,t}^{(y^\star)}
    -
    \alpha_t^\star W_{\eta,t}^+
    }{
    A_t(A_t+W_{\eta,t}^+)
    }
    \mid
    \mathcal F_t
    \right]
    >
    0.
\]
\end{proof}

\subsection{Proof of \propref{prop:correct_minority_recovery}}
\label{app:proof_correct_minority_recovery}

\begin{proof}[Proof of \propref{prop:correct_minority_recovery}]
The LVD posterior predictive uses the same denominator for every candidate
answer. Therefore LVD ranks $y^\star$ above $y_v$ if and only if its numerator
is larger:
\[
    \alpha_t^{\mathrm{prop}}(y^\star)+W_{\eta,t}^{(y^\star)}
    >
    \alpha_t^{\mathrm{prop}}(y_v)+W_{\eta,t}^{(y_v)}.
\]
Rearranging terms gives
\[
    W_{\eta,t}^{(y^\star)}-W_{\eta,t}^{(y_v)}
    >
    \alpha_t^{\mathrm{prop}}(y_v)-\alpha_t^{\mathrm{prop}}(y^\star),
\]
which is \eqnref{eq:lvd_recovery_condition}.
\end{proof}

\section{Limitations}\label{appsec:limitation}
The method is a headroom-recovery method, not a general claim that debate beats
vote. LVD can only recover answers that have already been surfaced by the agent
society. If the society contains no correct proposal, the readout has no
headroom to recover. LVD can also damage cases where vote is already correct,
which is why the experiments report damage rate in addition to final accuracy.

The label-free construction proxy is intentionally simple, but its reliability
depends on answer clustering. It is most natural in exact-answer or
near-exact-answer settings where valid answers can be normalized and grouped.
Open-ended tasks may require a different proxy and a different way to define
proposal-hit coverage. Appendix~\ref{app:secondary_construction_diagnostics}
shows that a Hidden-Prefix representation can construct a competitive society
without labels or answer identities. This is evidence for answer-string-free
construction, but it remains a multiple-choice experiment and does not by
itself validate semantic equivalence or evaluation for open-ended outputs.

The experiments use extra test-time compute. We therefore treat budget alignment
as part of the empirical claim, not an implementation detail. We report matched
or explicitly stated budgets for vote, plain debate, and LVD.

\end{document}